\title[Confidence Estimation via Sequential Likelihood Mixing]{Confidence Estimation via Sequential Likelihood Mixing}
\newtheorem{assumption}{Assumption}
\begin{document}

\maketitle

\begin{abstract}%
We present a universal framework for constructing confidence sets based on sequential likelihood mixing. Building upon classical results from sequential analysis, we provide a unifying perspective on several recent lines of work, and establish fundamental connections between sequential mixing, Bayesian inference and regret inequalities from online estimation. The framework applies to any realizable family of likelihood functions and allows for non-i.i.d.~data and anytime validity. Moreover, the framework seamlessly integrates standard approximate inference techniques, such as variational inference and sampling-based methods, and extends to misspecified model classes, while preserving provable coverage guarantees. We illustrate the power of the framework by deriving tighter confidence sequences for classical settings, including sequential linear regression and sparse estimation, with simplified proofs.
\end{abstract}

\begin{keywords}%
  confidence sequences, likelihood ratios, Bayesian inference, online estimation
\end{keywords}

\section{Introduction}
Estimating model uncertainty is a fundamental challenge in machine learning and data science. Uncertainty quantification is essential for assessing model accuracy and robustness, particularly in safety-critical domains such as medical diagnosis, drug discovery, and autonomous driving. Uncertainty quantification also plays a key role in guiding exploration in sequential decision-making algorithms, including active learning, adaptive experimental design and reinforcement learning. 

Significant efforts have been devoted to developing scalable uncertainty estimation techniques for modern machine learning applications. However, most existing work on uncertainty quantification focuses on narrow function classes and specific noise distributions, or relies on asymptotic expansions or adhoc approximations. Moreover, AI systems are often deployed in interactive settings, where data is generated sequentially, introducing complex data dependencies that render any i.i.d.~assumption invalid. Constructing practical, non-asymptotic, any-time valid confidence sets under universal assumptions on the stochastic model class remains a major challenge.

Classical uncertainty quantification methods originate in the field of statistics and are typically classified into Bayesian and frequentist approaches. In frequentist uncertainty estimation, only the data is random, imposing a distribution on the estimator constructed from the data. Confidence sets are derived by modeling the randomness of the data generating process, most commonly by specifying tail bounds on the error distribution. This approach relies on propagating the noise distribution through the estimator, often (though not always) leveraging central limit arguments, i.i.d. assumptions, and strong regularity conditions on the model class to ensure tractability. Moreover, this approach is invariably dependent on the estimator that one chooses. 

Bayesian statistics, on the other hand, introduces a prior distribution over the unknown parameter and updates the posterior using Bayes' rule and the likelihood. The posterior distribution over the model parameters can be interpreted as a measure of uncertainty. Notably, Bayesian inference is a `universal recipe', in the sense that it does not require i.i.d.~data or specific assumptions on the likelihood. Obtaining the Bayesian posterior is therefore a purely computational problem, and much work has been done to develop computationally efficient methods and approximations. 

Despite its popularity, the Bayesian approach is often referred to as \emph{subjective}, as it depends on the choice of the prior. In fact, it is well understood that Bayesian credible sets do not attain frequentist coverage in general, and arguably, the Bayesian view-point is only viable assuming that the true parameter is randomly sampled from the prior distribution. It therefore seems to be a common misconception that Bayesian inference cannot easily lead to confidence sets when frequentist coverage is required. As we will see soon, the opposite is true: 
\begin{quotation}
    \emph{Any Bayesian inference model can be turned into a frequentist confidence set, and a frequentist confidence set can be constructed using Bayesian inference, preserving provable coverage even if approximate inference is used or the model is misspecified.} 
\end{quotation}
In this work, we revisit classical ideas from sequential analysis \citep{wald1947sequential2} for constructing confidence sets using likelihood ratio martingales and Ville's inequality \citep{ville1939etude}, tracing back to early works by \citet{darling1968some,robbins1970statistical}. In its simplest form, the confidence sets are level sets of the negative log-likelihood function $L_t(\theta) = L_t(\theta|x_1,y_1,\dots, x_t, y_t)$,
\begin{align}
    C_t = \big\{\theta \in \Theta : L_t(\theta) \leq \beta_t(\delta)\big\} \,, \label{eq:level-set}
\end{align}
with a confidence coefficient $\beta_t(\delta)$ that scales with the failure probability $\delta \in (0,1)$, and the complexity of the model class. To build intuition, a typical confidence coefficient will take the following approximate form, 
$$\beta_t(\delta) \approx \log \frac{1}{\delta} + L_t(\hat \theta_t^{\MLE}) + B_t\,,$$ where $\hat \theta_t^{\MLE}$ is the maximum likelihood estimator, and $B_t \geq 0$ is a complexity term that depends on the model class. Written like this, the confidence coefficient defines a \emph{relative likelihood} that allows us to interpret the confidence level of a candidate model $\theta \in \Theta$ by comparing $L_t(\theta)$ to $L_t(\hat \theta_t^\MLE)$. In practice, however, a worst-case bound on the model complexity might lead to overly conservative confidence sets, or tight bounds may not be easily available for complex function classes. Therefore, it is beneficial to define the confidence coefficient $\beta_t(\delta)$ in a data-dependent way. Importantly, a smaller $\beta_t(\delta)$ directly translates into a tighter confidence set. 

A key insight is the use of sequential mixing distributions over the likelihood ratio martingale (formally introduced in \cref{sec:mixing}), which allows to define a data-dependent confidence coefficient applicable to general model classes. Moreover, sequential mixing establishes a fundamental connection between Bayesian inference and frequentist confidence estimation. The estimation framework is not limited to Bayesian inference, and integrates with established approximate inference techniques such as variational inference and sampling based techniques, while maintaining provable coverage. The framework is universal in the sense that it does not require i.i.d.~data, any assumptions on the family of likelihood functions, and applies to any sequence of estimators. Further, using standard regret inequalities from online density estimation, the construction establishes a strong connection to classical maximum likelihood estimation.

\paragraph{Related Work} The core constructions in this paper go back to classical works, e.g.,~by \cite{wald1945sequential,darling1968some,robbins1970statistical}, and has seen revived interest in more recent literature \citep[e.g.,][]{wasserman2020universal,Emm23,flynn2024improved,lee2024unified}. More broadly, sequential mixing appears in online convex optimization, e.g., online aggregation, F-weighted portfolios \citep[c.f.,][]{orabona2019modern}, and prediction with expert advice \citep{cesa2006prediction}. Closely related is also the emerging literature on sequential testing using e-values \citep[see, e.g.,][]{grunwald2020safe} and PAC-Bayes bounds \citep[e.g.,][]{lugosi2023online,chugg2023unified}. Different to these works, here we focus on constructing confidence set for model parameters, with typical applications including inverse problems and compressed sensing.  We will refer to further related works in the relevant context, and defer a more complete historic account to \cref{sec:literature}.

\paragraph{Contributions} Our first contribution is to review classical and more recent results, establishing tight connections between seemingly disconnected works, and offering a unified framework for constructing confidence sequences using likelihood ratio martingales. The specific application of sequential mixing distributions to likelihood ratios, along with subsequent equivalence results and connections to Bayesian (approximate) inference, appears novel in this generality, to the best of our knowledge. We demonstrate the strength of the formulation by deriving improved confidence sets for sequential linear regression and for sparse estimation, with simplified proofs.

\section{Confidence Estimation via Sequential Likelihood Mixing}

% semi-formal problem definition
We study the sequential confidence estimation problem in which the learner is given a set of model parameters $\Theta$, and a family of conditional densities $\cM = \{p_\theta(y|x): \theta \in \Theta\}$. 
The learner observes a data stream $x_1, y_1, x_2, y_2, \dots, x_t, y_t, \dots$, where $x_t$ is a covariate and $y_t \sim p_{\theta^*}(\cdot|x_t)$ is an observation sampled from a distribution with a ground-truth parameter $\theta^* \in \Theta$. We make no assumptions on how the covariates a generated, in particular, allow for an arbitrary dependence between $x_t$ and all prior data. The role of $x_t$ is secondary and is mostly introduced to match standard machine learning settings. For simplicity, we suppress the dependence on $x_t$ and denote $p_t(y|\theta) = p_{\theta}(y|x_t)$.
Our objective is to construct an anytime \emph{$(1-\delta)$-confidence sequence}: In each round $t \geq 1$, the learner outputs a set $C_t \subset \Theta$, such that $\theta^* \in C_t$ for all $t \geq 1$ holds with probability at least $1-\delta$.

% formal problem definition
\paragraph{Setting} Formally, we let $\cY$ be a measurable observation space and $\cX$ be a measurable covariate space. Let $\bP$ be the true data generating distribution over $(\cX \times \cY)^\infty$, with the data stream $({x_t, y_t})_{t=1}^\infty \sim \bP$. Unless stated otherwise, all probabilistic statements are with respect to this measure. The filtration corresponding to the observation sequence is $\cF_t = \{x_1,y_1, \dots ,x_{t-1},y_{t-1}, x_t\}$. We abbreviate the marginal distribution over the outcome $y_t$, conditioned on past data, with $\bP_t = \bP_{y_t}(\cdot | \cF_t)$. We assume that $\bP_t$ is dominated by a base measure $\xi$ over $\cY$ for all $t \geq 1$. 

We require that $\Theta$ is a measurable space and the density $p_t(y|\theta)$ defines a probability kernel $p_t(y|\theta) d \xi(y)$ from $\Theta$ to $\cY$. We consider both, finite and continuous $\Theta$, and let $\omega$ be a base measure over $\Theta$ (typically the counting measure for finite $\Theta$, or the Lebesgue measure for $\Theta \subset \bR^d$). The space of probability distributions over $\Theta$ is $\sP(\Theta)$. The Kullback-Leibner (KL) divergence between two distributions $\rho, \mu \in \sP(\Theta)$ with $\rho, \mu \ll \omega$ is $\KL(\rho\|\mu) = \int \rho(\theta) \log \frac{\rho(\theta)}{\mu(\theta)} d\omega(\theta)$. We make the following realizability assumption (which we will relax in \cref{sec:misspecified}),
\begin{assumption}[Realizability]\label{a:realizability}
There exists a $\theta^* \in \Theta$ such that $p_t(\cdot|\theta^*) = \frac{d\bP_t}{d\xi}$\,.
\end{assumption}
We remark that realizability and our definition of the model class imply that the distribution $y_t$ conditioned on $x_t$ is stationary and independent of past data, $\bP_t(\cdot|\cF_t) = \bP_t(\cdot|x_t) = p_{\theta^*}(\cdot|x_t)d\xi$. This requirement can be relaxed by defining a model class that is conditioned on the full history.

\paragraph{Confidence Sequences} Let $\delta \in (0,1)$ be a failure probability. Our a goal is to construct a $\cF_t$-adapted sequence of confidence sets $C_1, C_2, \dots \subset \Theta$, such that
\begin{align*}
\bP[\exists t \geq 1 \text{ s.t. }\theta^* \notin C_t] \leq \delta\,.
\end{align*}
In other words, the Type-I error over all time steps is at most $\delta$. We briefly remark that coverage can be achieved trivially, e.g., $C_t = \Theta$ for all $t \geq 1$ is a $(1-\delta)$-confidence sequence, and so is a randomly choosing $(1-\delta)$-fraction of $\Theta$, such that the probability $\{\theta \in C_t\}$ is $1-\delta$ for \emph{any} $\theta \in \Theta$. While these maintain coverage, they are not useful in practice because they do not control the Type-II error, or equivalently, the size of the confidence set, in a meaningful way. Hence, we aim to construct a $(1-\delta)$-confidence sequence and make $C_t$ as small as possible at the same time.

\subsection{Ville's Inequality and Sequential Likelihood Ratios}

We start by reviewing confidence sets based on sequential likelihood ratios \citep{robbins1970statistical,lai1985asymptotically}. Assume that in each round $t \geq 1$, after observing the data $\{x_1, y_1, \dots x_t, y_t\}$, we have a way to construct an estimator $\hat \theta_t \in \Theta$. Formally, the \emph{estimation sequence} $\hat \theta_1, \hat \theta_2, \dots$ is an $\cF_t$-adapted process in $\Theta$. In addition, we include a prior estimate $\hat \theta_0 \in \Theta$ that is chosen before any data is observed. Define the sequential likelihood ratio process for $\theta \in \Theta$,
\begin{align}
    R_t(\theta) = \prod_{s=1}^t \frac{p_s(y_s|\hat \theta_{s-1})}{p_s(y_s|\theta)} \,.
\end{align}
It is important to note that the likelihood of $y_s$ is evaluated under the estimate $\hat \theta_{s-1}$, making the predicted density $p_s(\cdot|\hat \theta_{s-1})$ independent of the observation $y_s$. It is straightforward to verify that $R_t(\theta^*)$ is a non-negative martingale with respect to the filtration $\cF_t$, i.e. 
\begin{align}
    \EE[R_t(\theta^*)|\cF_{t}] &= R_{t-1}(\theta^*) \int \frac{p_t(y|\hat \theta_{s-1})}{p_t(y|\theta)} d\bP_t(y)\nonumber \\
    & \stackrel{(*)}{=} R_{t-1}(\theta^*) \int \frac{p_t(y|\hat \theta_{s-1})}{p_t(y|\theta)} p_t(y|\theta) d\xi(y)  = R_{t-1}(\theta^*) \,.
\end{align}
The second equality $(*)$ uses realizability (\cref{a:realizability}).
% \begin{align*}
% \EE[R_t(\theta^*)|\cF_{t}] = R_{t-1}(\theta^*)
% \end{align*}
The tail event that $R_t(\theta^*)$ growths large is bounded using Ville's inequality \citep{ville1939etude}, which generalizes Markov's inequality to non-negative supermartingales.
\begin{lemma}[Ville's Inequality \citep{ville1939etude}]
    Let \((M_t)_{t \geq 1}\) be a non-negative supermartingale. For any \(\alpha > 0\),
    \begin{align}
        \mathbb{P}\left[\sup_{t \geq 1} M_t \geq \alpha \right] \leq \frac{\mathbb{E}[M_1]}{\alpha} \,.
    \end{align}
\end{lemma}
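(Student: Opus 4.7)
The plan is to prove Ville's inequality via a standard optional stopping argument applied at the hitting time of the threshold $\alpha$. The key observation is that the event $\{\sup_{t \geq 1} M_t \geq \alpha\}$ coincides with $\{\tau < \infty\}$, where $\tau = \inf\{t \geq 1 : M_t \geq \alpha\}$ is the first time the supermartingale crosses the level $\alpha$ (with the convention $\inf \emptyset = \infty$). Thus, it suffices to upper bound $\mathbb{P}[\tau < \infty]$.

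First, I would consider the truncated events $\{\tau \leq n\}$ for finite $n$, reducing to bounded stopping times where the optional stopping theorem applies unambiguously. Since $(M_t)_{t \geq 1}$ is a non-negative supermartingale and $n \wedge \tau$ is a bounded stopping time, the optional stopping theorem yields
\begin{align*}
\mathbb{E}[M_{n \wedge \tau}] \leq \mathbb{E}[M_1]\,.
\end{align*}
Next, I would exploit non-negativity together with the defining property of $\tau$: on the event $\{\tau \leq n\}$, we have $M_{n \wedge \tau} = M_\tau \geq \alpha$, while on $\{\tau > n\}$, $M_{n \wedge \tau} \geq 0$. Consequently,
\begin{align*}
\mathbb{E}[M_{n \wedge \tau}] \geq \mathbb{E}[M_{n \wedge \tau} \mathbf{1}\{\tau \leq n\}] \geq \alpha \, \mathbb{P}[\tau \leq n]\,,
\end{align*}
which, combined with the optional stopping bound, gives $\mathbb{P}[\tau \leq n] \leq \mathbb{E}[M_1]/\alpha$.

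Finally, I would pass to the limit $n \to \infty$. Since the events $\{\tau \leq n\}$ are increasing with union $\{\tau < \infty\}$, continuity of measure from below yields $\mathbb{P}[\tau < \infty] = \lim_{n \to \infty} \mathbb{P}[\tau \leq n] \leq \mathbb{E}[M_1]/\alpha$, establishing the claim. The most delicate step is justifying the use of optional stopping at the unbounded random time $\tau$, which is precisely why I would first stop at $n \wedge \tau$ and only afterwards take $n \to \infty$; using non-negativity of $M_t$ is essential here, since it prevents the contribution on $\{\tau > n\}$ from being negative and keeps the one-sided inequality clean without any uniform integrability assumption.
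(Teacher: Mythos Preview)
Your proof is correct and is the standard optional stopping argument for Ville's inequality. Note, however, that the paper does not actually supply a proof of this lemma: it is stated as a classical result with a citation to \citet{ville1939etude} and then applied directly, so there is no ``paper's own proof'' to compare against. Your argument would serve as a perfectly adequate self-contained justification were one desired.
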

Applying Ville's inequality to $R_t(\theta^*)$, noting that $R_t(\theta^*) \geq 0$ and $\EE[R_1] =1$, we get that
\begin{align*}
    \bP\left[\sup_{t \geq 1} R_t(\theta^*) \geq \frac{1}{\delta}\right] \leq \delta \,.
\end{align*}
By inverting the inequality, we obtain a $(1-\delta)$-confidence sequence $C_t = \{ \theta \in \Theta : R_t(\theta) \leq \frac{1}{\delta}\}$. As a last step, we re-write the inequality using the negative log-likelihood,
\begin{align*}
L_t(\theta) = L(\theta|x_1,y_1, \dots, x_t, y_t) = - \sum_{s=1}^t \log p_t(y_t|\theta) \,.
\end{align*}
We summarize the construction in the next theorem. Similar constructions have gained renewed interest more recently \citep[e.g.,][]{wasserman2020universal,Emm23}.
\begin{theorem}[Sequential Likelihood-Ratio \citep{robbins1970statistical}] \label{result:likelihood_ratio_confidence_set}
   Let $\hat \theta_0, \hat \theta_1, \hat \theta_2, \dots$ be an estimation sequence adapted to the filtration $\cF_t$. Define
    \begin{align*}
        C_t = \left\{ \theta \in \Theta : L_t(\theta) \leq \log \frac{1}{\delta} - \sum_{s=1}^t \log p_s(y_s|\hat \theta_{s-1}) \right\} \,.
    \end{align*}
    Then $C_t$ is a $(1-\delta)$-confidence sequence for $\theta^*$, i.e.~$\bP[ \forall t \geq 1, \theta^* \in C_t ] \geq 1-\delta$.
\end{theorem}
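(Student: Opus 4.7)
The plan is to apply Ville's inequality directly to the likelihood ratio martingale $R_t(\theta^*)$ constructed in the excerpt, then invert the resulting tail bound and translate it from likelihood ratios to negative log-likelihoods to match the stated form of $C_t$.

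First, I would observe that all of the martingale machinery is already in place: the computation preceding the statement shows that $R_t(\theta^*)$ is a non-negative $\cF_t$-martingale with $\E[R_1(\theta^*)] = 1$ (using realizability in the step marked $(*)$ to cancel $p_t(y|\theta^*)$ with the denominator and integrate the numerator to $1$). Since $\hat\theta_0$ is chosen before any data, $R_0 = 1$ is deterministic, so the initial expectation is $1$ as required for Ville's inequality.

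Next, applying Ville's inequality with $\alpha = 1/\delta$ yields
\begin{align*}
\bP\left[\sup_{t\geq 1} R_t(\theta^*) \geq \frac{1}{\delta}\right] \leq \delta.
\end{align*}
Taking complements, with probability at least $1-\delta$ we have $R_t(\theta^*) < 1/\delta$ for every $t \geq 1$ simultaneously. I would then take logarithms of the event $R_t(\theta) \leq 1/\delta$: using the definition $R_t(\theta) = \prod_{s=1}^t p_s(y_s|\hat\theta_{s-1})/p_s(y_s|\theta)$, the inequality $\log R_t(\theta) \leq \log(1/\delta)$ becomes $\sum_{s=1}^t \log p_s(y_s|\hat\theta_{s-1}) - \sum_{s=1}^t \log p_s(y_s|\theta) \leq \log(1/\delta)$, which rearranges to $L_t(\theta) \leq \log(1/\delta) - \sum_{s=1}^t \log p_s(y_s|\hat\theta_{s-1})$, exactly the defining inequality of $C_t$. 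Hence the event $\{R_t(\theta^*) \leq 1/\delta \text{ for all } t\}$ coincides with $\{\theta^* \in C_t \text{ for all } t\}$, and it has probability at least $1-\delta$.

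There is no substantive obstacle here: the martingale property is verified by the excerpt and Ville's inequality is cited as a lemma, so the proof reduces to one application of each together with an algebraic rewriting. The only points that deserve a brief comment are the use of $\hat\theta_{s-1}$ being $\cF_s$-measurable (so that $p_s(\cdot|\hat\theta_{s-1})$ is a legitimate predictive density independent of $y_s$ given $\cF_s$) and the fact that the supremum over $t$ in Ville's inequality is what delivers the anytime-valid $\forall t \geq 1$ statement rather than a fixed-time bound.
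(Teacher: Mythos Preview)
Your proposal is correct and matches the paper's own argument essentially step for step: verify that $R_t(\theta^*)$ is a non-negative martingale with unit initial expectation, apply Ville's inequality with threshold $1/\delta$, and rewrite the resulting event in terms of the negative log-likelihood to obtain the stated $C_t$. The paper presents exactly this reasoning in the paragraphs immediately preceding the theorem, so there is nothing to add.
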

% The geometric shape of the confidence set is a level set of the log-likelihood function (\cref{eq:level-set}), 
% \begin{align}
%     C_t = \{\theta \in \Theta : L_t(\theta) \leq \beta_t(\delta)\} \label{eq:level-set}
% \end{align}
% with a confidence coefficient $\beta_t(\delta)$ that depends on the data, the estimation sequence and the failure probability $\delta$. 
% Confidence sets that take the form of a level set of the log-likelihood function will be a re-occurring theme in this document, and we may compare the size of such confidence sets by comparing the coefficient $\beta_t(\delta)$. 
The confidence set defined in \cref{result:likelihood_ratio_confidence_set} does not require any assumptions on the family of likelihood functions other than realizability, and makes no assumptions on how the covariates $x_1, x_2, \dots$ are generated, or how the estimation sequence $\hat \theta_0, \hat \theta_1, \dots, \hat \theta_t$ is chosen. The size of the confidence set depends on the ability of the learner to produce an estimation sequence that maximizes the log-likelihood $\sum_{s=1}^t \log p_s(y_s|\hat \theta_{s-1})$. Since the estimated density $p_t(\cdot |\hat \theta_{s-1})$ is evaluated under the next observation $y_s$, it measures the accuracy of the learner to predict the next outcome.

% The construction has two immediate drawbacks: First, the construction requires the parameter estimate $\hat \theta_{t-1}$ used to define the log-likelhood $p_t(y_t|\hat \theta_{t-1})$ at time $t$ to be predictable, and classical estimators such as the maximum likelihood estimator cannot be directly plugged into the construction. Second, the construction heavily depends on the parametrization of the conditional likelihood, which introduces an arbitrary dependence that makes it difficult to directly analyze the confidence set without further assumptions.

\subsubsection{Maximum Likelihood Estimation} \label{sec:mle}
Maximum likelihood estimation is a cornerstone of classical frequentist statistics and closely related to sequential likelihood ratios. Formally, we define the maximum likelihood estimate (MLE)  as a minimizer of the negative log-likelihood,
\begin{align*}
    \hat \theta_t^{\MLE} = \argmin_{\theta \in \Theta} L_t(\theta) \,.
\end{align*}
Further, define the likehood ratio for $\nu, \theta \in \Theta$,
\begin{align} 
    R_t(\nu, \theta) = \prod_{s=1}^t \frac{p_s(y_s|\nu)}{p_s(y_s|\theta)}  \,.\label{eq:two-parameter-ratio}
\end{align}
It is tempting to use the maximum likelihood estimator in the construction of \cref{result:likelihood_ratio_confidence_set}. However, this fails because $R_t(\hat \theta_t^{\MLE}; \theta^*) = \max_{\nu \in \Theta} R_t(\nu; \theta^*)$ is not a super-martingale.

Fortunately, there are several ways to proceed, and we will come back to this in later sections. To set expectations, note that $R_t(\nu;\theta^*)$ is a martingale for any \emph{fixed} $\nu \in \Theta$. Ville's inequality implies that $\bP[\sup_{t \geq 1} R_t(\nu;\theta^*) \geq \frac{1}{\delta}] < \delta$. For finite $\Theta$, a union bound over $\Theta$ suffices to obtain a bound for all $\nu \in \Theta$ simultaneously, and, in particular, for any estimator $\hat \theta_t$. The resulting $(1-\delta)$-confidence sequence is
\begin{align}
C_t = \{ \theta : L_t(\theta) \leq \log \frac{1}{\delta} +  L_t(\hat \theta_t) + \log(|\Theta|) \} \,.
\end{align}
 Note that $C_t$ is a log-likelihood level set as in \cref{eq:level-set} with coefficient $\beta_t(\delta) = \log \frac{|\Theta|}{\delta} +  L_t(\hat \theta_t)$. The difference to \cref{result:likelihood_ratio_confidence_set} is that the union bound allows to compare directly to the log-likelihood $L_t(\hat \theta_t)$ for any estimator $\hat \theta_t$ computed in hindsight, using all observed data. The coefficient is minimized (resulting in the smallest confidence set) by the maximum likelihood estimator $\hat \theta_t^\MLE$. For continuous parameter sets, the argument extends using a suitable covering for $\Theta$.

\subsection{Prior Likelihood Mixing}

We now introduce a second approach to construct an anytime confidence sequence using the idea of `mixing'. Again, we make use of a martingale process, defined by integrating the likelihood ratio over a given prior distribution. An early reference to this idea is by \cite{robbins1970boundary}, who use mixture martingales to prove law-of-the-iterated-logarithm type bounds. Mixture martingales were popularized in the context of self-normalized bounds by \cite{pena2009self} as the `method of mixtures', with repeated interest in the machine learning community \citep[e.g.,][]{abbasi2011improved}.

Formally, let $\mu_0 \in \sP(\Theta)$ be a data-independent prior distribution. Recall the definition of $R_t(\nu;\theta)$ from \cref{eq:two-parameter-ratio}, and define the \emph{marginal likelihood ratio},
\begin{align*}
    Q_t(\theta) = \int R_t(\nu; \theta) d\mu_0(\nu) =  \frac{ \int \prod_{s=1}^t p_s(y_s|\nu)  d\mu_0(\nu)}{\prod_{s=1}^t p_s(y_s|\theta)} \,.
\end{align*}
Using Fubini's theorem, it follows that $Q_t(\theta^*)$ is a non-negative martingale with $\bE[Q_1(\theta^*)]=1$. Once again, we use Ville's inequality to bound the deviation,
\begin{align*}
    \bP\left[\sup_{t \geq 0} Q_t(\theta^*) \geq \frac{1}{\delta}\right] \leq \delta \,.
\end{align*}
The next theorem summarizes this result.
\begin{theorem}[Prior Likelihood Mixing]\label{result:prior_mixing}
    For any data-independent prior $\mu_0 \in \cP(\Theta)$, define
    \begin{align*}
        C_t &= \left\{ \theta \in \Theta: L_t(\theta) \leq  \log \frac{1}{\delta} - \log \int \prod_{s=1}^t p_s(y_s|\nu) d\mu_0(\nu) \right\} \,.
    \end{align*}
Then $C_t$ is a $(1-\delta)$-confidence sequence.
% where $L_t(\theta) := - \sum_{s=1}^{t} \log p_\theta(y_s|x_s)$ and $M_t = \int \prod_{s=1}^t p_{\nu}(y_s|x_s) d\mu_0(\nu)$
\end{theorem}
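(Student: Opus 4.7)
The plan is to follow the martingale-and-Ville recipe that was already sketched in the running discussion, filling in the two ingredients that were asserted but not fully verified: the martingale property of $Q_t(\theta^*)$ and the algebraic inversion that recovers the stated form of $C_t$.

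First, I would fix a version of $Q_t(\theta) = \int R_t(\nu;\theta)\, d\mu_0(\nu)$, which is well-defined as a non-negative random variable for every $\theta \in \Theta$ (with the convention that the ratio is $+\infty$ if the denominator vanishes, on a $\bP$-null set under realizability). The key structural point is that since $\mu_0$ is data-independent, $Q_t(\theta^*)$ is $\cF_t$-measurable. Next I would check the martingale identity $\EE[Q_t(\theta^*)\mid \cF_t] = Q_{t-1}(\theta^*)$ as follows: write
\begin{align*}
Q_t(\theta^*) = \int R_{t-1}(\nu;\theta^*) \, \frac{p_t(y_t|\nu)}{p_t(y_t|\theta^*)}\, d\mu_0(\nu),
\end{align*}
apply Fubini/Tonelli to exchange the integral over $\nu$ with the conditional expectation over $y_t$ (justified by non-negativity), and then use realizability (\cref{a:realizability}) to compute the inner conditional expectation as $\int \frac{p_t(y|\nu)}{p_t(y|\theta^*)}p_t(y|\theta^*)\,d\xi(y) = 1$. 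This yields $\EE[Q_t(\theta^*)\mid \cF_t] = R_{t-1}(\theta^*;\theta^*)\cdot \int d\mu_0(\nu) = Q_{t-1}(\theta^*)$ as an honest equality of random variables (with a separate base case $\EE[Q_1(\theta^*)] = 1$ by the same computation).

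With $Q_t(\theta^*)$ shown to be a non-negative martingale started at $1$, Ville's inequality immediately gives $\bP[\sup_{t\geq 1} Q_t(\theta^*) \geq 1/\delta] \leq \delta$. The final step is purely algebraic: the complementary event $\{Q_t(\theta^*) \leq 1/\delta \text{ for all } t\}$ translates, after taking $-\log$ of the definition of $Q_t(\theta^*)$, into
\begin{align*}
L_t(\theta^*) - \Bigl(-\log \int \prod_{s=1}^t p_s(y_s|\nu)\,d\mu_0(\nu)\Bigr) \;\leq\; \log\frac{1}{\delta},
\end{align*}
which is exactly the membership condition $\theta^* \in C_t$ in the statement.

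The only substantive obstacle is the Fubini exchange, and it is genuinely a bookkeeping issue rather than a deep one: non-negativity of the integrand together with the probability kernel assumption on $p_t(y|\theta)$ makes Tonelli applicable, while realizability ensures the cancellation in the inner integral. One could also worry about $p_t(y_t|\theta^*)=0$ events, but under \cref{a:realizability} these occur on a $\bP$-null set and can be ignored. No other step requires ingredients beyond what was already used in \cref{result:likelihood_ratio_confidence_set}.
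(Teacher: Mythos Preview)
Your proposal is correct and follows exactly the approach the paper outlines before the theorem: establish that $Q_t(\theta^*)$ is a non-negative martingale via Fubini/Tonelli and realizability, apply Ville's inequality, and then rewrite the event in terms of $L_t$. Two minor slips worth fixing: in the paper's filtration convention $Q_t(\theta^*)$ is $\cF_{t+1}$-measurable (not $\cF_t$-measurable), and the line ``$R_{t-1}(\theta^*;\theta^*)\cdot \int d\mu_0(\nu)$'' should read $\int R_{t-1}(\nu;\theta^*)\,d\mu_0(\nu)$.
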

Note that the size of the confidence set is determined by the logarithm of the \emph{marginal likelihood} under the prior $\mu_0$, also referred to as Bayesian \emph{evidence}. To understand intuitively why the marginal likelihood offers a reasonable confidence threshold, assume that $\Theta \subset \bR^d$. A second-order Taylor expansion of $L_t(\theta)$ around the maximum likelihood estimate $\hat \theta_t^\MLE$ allows to compute the marginal likelihood in closed-form as a Gaussian integral. This is known as Laplace's method \citep{laplace1774memoire}. Under suitable technical regularity assumptions \citep[e.g.,][]{shun1995laplace}, the following asymptotic expansion holds,  
where $I_t(\theta) = \frac{\partial^2}{\partial \theta^2} L_t(\theta) \in \bR^{d\times d}$ is the empirical Fisher information matrix:
\begin{align*}
    - \log \int \exp(-L_t(\nu))  h(\nu) d\theta
    &\approx L_t(\hat \theta_t^{\MLE})  + \frac{1}{2}\log \det I_t(\hat \theta_t^\MLE) - \frac{d}{2} \log(2\pi) - \log \mu_0(\hat \theta_t^\MLE)  \,.
\end{align*}
This should be compared to the discussion in \cref{sec:mle}. Although $\max_{\nu} R_t(\nu,\theta^*)$ is not a martingale, the marginal likelihood ratio offers an approximation while preserving the martingale property. Further details are in \cref{sec:laplace}, and for an elementary introduction to Laplace's method and its relation to the method of mixtures, we refer to \citet[Chapter 20]{lattimore2020bandit}.

% The idea of u
% \begin{itemize}
%     \item Laplace method and MLE, online density estimation
%     \item Bayesian Posterior However, the Bayesian-minded reader may already have spotted that the construction is equivalent to predicting with the Bayesian posterior $\mu_t$, which we will elaborate below. , also referred to as the Bayesian `evidence'. Simply put, larger evidence leads to a smaller confidence set.

% \end{itemize}
% Computationally, ELBO

% % Assume that the prior $\mu_0$ is absolutelly continuous with respect to the Lebegues meausre and admits a density. 
% Let 
% \begin{align*}
% \hat \theta_t^{MAP} = \argmax_{\theta \in \Theta} L_t(\theta) + \mu_0(\theta)
% \end{align*} \todoj{compact space ?}
% In the case where the prior is uniform, the MAP estimate is equivalent to the maximum likelihood estiamte.
% Laplace method approximates the integral by 
% \begin{align*}
% L_t(\theta) + \mu_0(\theta) \approx L_t(\theta) + \nabla + Hessian
% \end{align*

% Fisher Information $\cI(\theta) = \EE[I_t(\hat \theta_t^{MAP})]$

% \todoj{frequentist asymptotics}

\subsection{Sequential Likelihood Mixing}\label{sec:mixing}
% 
% The effectiveness of the sequential likelihood-ratio confidence set in \cref{result:likelihood_ratio_confidence_set} relies on prediciting a sequence of estimators that effectively maximize the log-likelihood $\sum_{s=1}^t \log p_s(y_s|\hat{\theta}_{s-1})$. However, with limited data, any estimator statistically deviates from the true parameter \(\theta^*\), effectively resulting in a small estimated likelihood, and hence a larger confidence set. A natural idea is to hedge  extend the idea of mixing to the sequential setting.
% 
We now generalize the idea of mixing to the sequential setting.
Formally, let $\mu_1, \mu_2, \dots$ be an $\cF_t$-adapted sequence of \emph{mixing distributions} in $\sP(\Theta)$, and $\mu_0 \in \sP(\Theta)$ a `prior' mixing distribution that is chosen before any data is observed. We define the \emph{sequential marginal likelihood ratio}, 
\begin{align*}
    S_t(\theta)= \prod_{s=1}^t  \frac{\int p_s(y_s|\nu) d\mu_{s-1}(\nu)}{p_s(y_s|\theta)} \,.
\end{align*}
One can think of the mixing distribution $\mu_{s-1}$ as a weighted prediction, aggregation over parameters in $\Theta$. As before, Fubini's theorem is used to show that $S_t(\theta^*)$ is a non-negative martingale under $\cF_t$ with $\EE[S_0(\theta^*)] = 1$, and therefore Ville's inequality implies that
\begin{align*}
     \bP\left[\sup_{t \geq 0} S_t(\theta^*) \geq \frac{1}{\delta}\right] \leq \delta \,.
\end{align*}
Rewriting the concentration inequality using the negative log-likelihood yields the following result.
\begin{theorem}[Sequential Likelihood Mixing]\label{result:posterior_mixing} Let $\mu_0, \mu_1, \mu_2,\dots$ be a sequence of $\cF_t$-adapted mixing distributions in over $\Theta$. Define for $t \geq 1$,
    \begin{align*}
        C_t  = \left\{ \theta \in \Theta: L_t(\theta) \leq  \log \frac{1}{\delta} - \sum_{s=1}^t \log \int p_s(y_s|\nu) d\mu_{s-1}(\nu)\right\} \,.
    \end{align*}
    Then $C_t$ is a $(1-\delta)$-confidence sequence.
\end{theorem}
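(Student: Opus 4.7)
The strategy mirrors the proofs of \cref{result:likelihood_ratio_confidence_set} and \cref{result:prior_mixing}: show that the sequential marginal likelihood ratio $S_t(\theta^*)$ is a non-negative $\cF_t$-martingale with unit initial expectation, apply Ville's inequality to bound its supremum, and invert to read off the level-set form of $C_t$. Non-negativity is immediate because every factor in $S_t(\theta^*)$ is a ratio of non-negative densities, and $S_0(\theta^*) = 1$ by the empty-product convention.

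The central step is verifying the one-step martingale identity. Writing
\begin{align*}
S_t(\theta^*) \;=\; S_{t-1}(\theta^*) \cdot \frac{\int p_t(y_t|\nu) \, d\mu_{t-1}(\nu)}{p_t(y_t|\theta^*)},
\end{align*}
both $S_{t-1}(\theta^*)$ and the mixing measure $\mu_{t-1}$ are $\cF_t$-measurable by the adaptedness assumption, so they factor out of the conditional expectation $\EE[\,\cdot \mid \cF_t]$. Realizability (\cref{a:realizability}) replaces the conditional law of $y_t$ by $p_t(\cdot|\theta^*)\,d\xi$, at which point the $p_t(y|\theta^*)$ in the denominator cancels the conditional density. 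The remaining quantity is $\int\!\int p_t(y|\nu)\,d\mu_{t-1}(\nu)\,d\xi(y)$, and Tonelli's theorem (valid by non-negativity) swaps the two integrals; the inner $\int p_t(y|\nu)\,d\xi(y)=1$ since each $p_t(\cdot|\nu)$ is a density, and integrating $1$ against the probability measure $\mu_{t-1}$ again gives $1$. Hence $\EE[S_t(\theta^*)\mid\cF_t]=S_{t-1}(\theta^*)$.

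With the martingale property in hand, Ville's inequality gives $\bP[\sup_{t \geq 1} S_t(\theta^*) \geq 1/\delta] \leq \delta$. To finish, I would take logarithms in the complementary event $S_t(\theta) \leq 1/\delta$: expanding
\begin{align*}
\log S_t(\theta) \;=\; L_t(\theta) + \sum_{s=1}^t \log \int p_s(y_s|\nu) \, d\mu_{s-1}(\nu),
\end{align*}
and rearranging recovers the defining inequality of $C_t$ verbatim. Therefore $\theta^* \in C_t$ for all $t \geq 1$ with probability at least $1-\delta$.

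There is no real technical obstacle beyond careful bookkeeping; the step I would double-check is that $\mu_{t-1}$, as a random probability measure on $\Theta$, is a bona fide $\cF_{t-1}$-measurable probability kernel, so that the swap of integration orders and the pull-out of $\mu_{t-1}$ from the conditional expectation are both rigorously justified. The sequential mixing merely replaces the single data-independent $\mu_0$ of \cref{result:prior_mixing} by an adapted sequence, but because each $\mu_{s-1}$ is frozen before $y_s$ is observed, the one-step martingale calculation is structurally identical to the fixed-prior case.
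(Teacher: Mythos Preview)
Your proposal is correct and follows exactly the same route the paper sketches before stating the theorem: verify that $S_t(\theta^*)$ is a non-negative martingale via Fubini/Tonelli and realizability, apply Ville's inequality, and rewrite in log-likelihood form. The paper's own argument is terser (it simply asserts the martingale property and invokes Ville), but your expanded one-step calculation and the final log rearrangement are precisely the details it leaves implicit.
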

 \Cref{result:posterior_mixing} can be understood as a sequential analog of the method of mixtures (\cref{result:prior_mixing}), and recovers the sequential likelihood ratio confidence set (\cref{result:likelihood_ratio_confidence_set}) as a special case by setting $\mu_t$ to a Dirac measure on $\hat \theta_t$. The use of sequential mixing distributions has appeared in the literature before \cite[e.g.,][]{kirschner2018information,Emm23,flynn2024tighter}, but we are not aware of a reference that applies mixing directly to the sequential likelihood ratio.

% The remainder of the work is spent on discussing the above choices, with the goal to establish precise relations and a unifying perspective. %However, before moving on, we introduce a second approach to create a mixing martingale.

\section{Applications, Connections and New Perspectives}

Sequential mixing provides a versatile framework for constructing confidence sequences for general parametric model classes via \cref{result:posterior_mixing}. A natural candidate for the mixing distribution is the Bayesian posterior, which we analyze in subsequent subsections. However, the range of possible mixing distributions extends far beyond Bayesian inference, often offering computational and statistical trade-offs. While there is no universally optimal choice, a key consideration is that the resulting confidence set is typically tighter when the mixing distribution $\mu_t$ is concentrated around the true parameter $\theta^*$. For completeness, we list several possible choices below, noting that all of these, including point estimates and sampling-based approximations (e.g., Monte Carlo methods), yield valid $(1-\delta)$-confidence sequences.

\begin{itemize}
    \item \textbf{Bayesian Inference:} A natural approach is updating the mixing distributions via Bayes' rule: $\mu_t(\theta) \propto \prod_{s=1}^t p_t(y_t|\theta) \mu_0(\theta)$. This is discussed in detail in \cref{sec:bayes}.
    \item \textbf{Variational Inference:} When the marginal likelihood is intractable, variational inference methods provide computational lower bounds on the evidence, see \cref{sec:elbo} below.
    \item \textbf{Posterior Sampling:} Using sampling-based approximations, such as Langevin dynamics \citep{dwivedi2019log}, we can set $\mu_t(\theta) = \frac{1}{N} \sum_{i=1}^N \delta_{\tilde{\theta}_t^i}$ for $N$ samples $\tilde \theta_t^1, \dots, \tilde \theta_t^N \in \Theta$.
    \item \textbf{Online Estimation:} Complexity bounds from online density estimation allow to relate the sequential mixing confidence set to classical supervised estimation, see \cref{sec:oco}.
    % \item \textbf{Point Estimates:} Point estimates, such as the sequential maximum likelihood estimator $\mu_t = \delta_{\hat{\theta}_t^{\text{MLE}}}$, see \citep{robbins1970statistical,Emm23}.
\end{itemize}

\subsection{From Bayesian to Frequentist Inference}\label{sec:bayes}

A natural choice for the mixing distribution is the Bayesian posterior, which establishes a fundamental connection between frequentist confidence estimation and Bayesian inference. To explore this relationship, we first formally define the Bayesian inference model.
\begin{assumption}[Bayesian Inference]\label{a:bayes}
    In the Bayesian inference model, the learner defines a prior distribution $\mu_0 \in \sP(\Theta)$ over model parameters (independent of the data), and predicts using the posterior distribution $\mu_t(\theta) \propto \prod_{s=1}^{t-1} p_s(y_s|\theta) \mu_0(\theta)$.
\end{assumption}
The main result of this section establishes that if the mixing distributions are computed according to Bayes' rule, then prior likelihood mixing (\cref{result:prior_mixing}) and sequential likelihood mixing (\cref{result:posterior_mixing}) are equivalent. A further application of Bayes rule shows that any (realizable) Bayesian model can be turned into a $(1-\delta)$-confidence sequence by comparing the log posterior probability $\log \mu_t(\theta)$ to the log prior probability $\log \mu_0(\theta)$. This is known as \emph{prior-posterior ratio confidence set} \citep{waudby2020confidence}: 
\begin{align*}
    C_t =  \left\{ \theta \in \Theta: - \log \mu_t(\theta) \leq  \log \frac{1}{\delta} - \log \mu_0(\theta) \right\} \,.
\end{align*}
The equivalence result is foreshadowed in the works by \citet{waudby2020confidence} and \citet{neiswanger2021uncertainty}, who establish the posterior-ratio confidence set and the connection to the marginal likelihood. The explicit equivalence to the sequential mixing framework, however seems to be absent in prior works, and is formally given in \cref{result:mixing-equivalence} below. 
%As a consequence, the concentration bounds for sequential linear regression by \citet{neiswanger2021uncertainty,flynn2024improved,flynn2024tighter} and earlier work by \cite{abbasi2011improved} are essentially equivalent, as we illustrate below. \todoj{maybe move below Lemma 6, so that 'below' makes sense}

\begin{theorem}[Mixing Equivalence]\label{result:mixing-equivalence} If the mixing distributions are chosen according to Bayes' rule, prior likelihood mixing (\cref{result:prior_mixing}) and sequential mixing (\cref{result:posterior_mixing}) are equivalent.
\end{theorem}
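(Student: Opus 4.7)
The plan is to show equivalence by verifying that the two confidence thresholds in \cref{result:prior_mixing} and \cref{result:posterior_mixing} coincide when the $\mu_{s-1}$ are Bayes posteriors. Since both confidence sets are of the form $\{\theta : L_t(\theta) \leq \log(1/\delta) + A_t\}$ with identical likelihood term $L_t(\theta)$, it suffices to prove that the additive correction term is the same, i.e.
\[
\log \int \prod_{s=1}^t p_s(y_s|\nu)\, d\mu_0(\nu) \;=\; \sum_{s=1}^t \log \int p_s(y_s|\nu)\, d\mu_{s-1}(\nu)\,.
\]

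The natural approach is a telescoping argument via the marginal likelihood. First I would define the running marginal likelihood $Z_t := \int \prod_{s=1}^t p_s(y_s|\nu)\, d\mu_0(\nu)$, with the convention $Z_0 = 1$. Under \cref{a:bayes}, Bayes' rule gives the explicit posterior density $\mu_t(\nu) = Z_t^{-1} \prod_{s=1}^t p_s(y_s|\nu)\, \mu_0(\nu)$, so that $Z_t \mu_t(\nu) = \prod_{s=1}^t p_s(y_s|\nu)\, \mu_0(\nu)$ pointwise (up to $\omega$-null sets).

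Next I would compute the posterior predictive for $y_s$ by integrating against $\mu_{s-1}$:
\[
\int p_s(y_s|\nu)\, d\mu_{s-1}(\nu) \;=\; \frac{1}{Z_{s-1}}\int p_s(y_s|\nu) \prod_{r=1}^{s-1} p_r(y_r|\nu)\, d\mu_0(\nu) \;=\; \frac{Z_s}{Z_{s-1}}\,.
\]
Taking the product over $s = 1, \dots, t$ gives a telescoping cancellation,
\[
\prod_{s=1}^t \int p_s(y_s|\nu)\, d\mu_{s-1}(\nu) \;=\; \prod_{s=1}^t \frac{Z_s}{Z_{s-1}} \;=\; Z_t\,,
\]
and taking logarithms yields the desired identity, completing the proof.

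There is no real obstacle here: the argument is essentially the chain rule for joint densities expressed as a product of one-step posterior predictives. The only care needed is to make sure the Bayes' rule step is well-defined, which requires $Z_{s-1} > 0$; this holds almost surely under \cref{a:realizability} because $Z_{s-1} \geq p_1(y_1|\theta^*)\cdots p_{s-1}(y_{s-1}|\theta^*)\,\mu_0$ contributions are strictly positive on the support of $\bP$, and degenerate null events can be absorbed without affecting the $(1-\delta)$-coverage statement.
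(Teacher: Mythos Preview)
Your proposal is correct and follows essentially the same approach as the paper: both establish the identity $\sum_{s=1}^t \log \int p_s(y_s|\nu)\, d\mu_{s-1}(\nu) = \log \int \prod_{s=1}^t p_s(y_s|\nu)\, d\mu_{0}(\nu)$ by recursively applying Bayes' rule, which is exactly your telescoping of the marginal likelihoods $Z_s/Z_{s-1}$. The paper's proof is a one-line sketch of what you have spelled out in detail.
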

\begin{proof}
   The result follows by applying Bayes' rule recursively to show the following equality, $\sum_{s=1}^t \log \int p_s(y_s|\nu) d\mu_{s-1}(\nu) = \log \int \prod_{s=1}^t p_s(y_s|\nu) d\mu_{0}(\nu)$.
    % \begin{align*}
    %     \sum_{s=1}^t \log \int p_s(y_s|\nu) d\mu_{s-1}(\nu) = \log \int \prod_{s=1}^t p_s(y_s|\nu) d\mu_{0}(\nu) \,. 
    % \end{align*}
\end{proof}

The surprising consequence is, that within the Bayesian inference model, sequential mixing provides no statistical advantage compared to averaging the likelihood over the prior. Less surprisingly though, Bayes' rule can be understood as an incremental update rule to compute the marginal likelihood. In this sense, the equivalence can be re-stated as recovering prior mixing (\cref{result:prior_mixing}) as a special case of sequential mixing (\cref{result:posterior_mixing}). However, note that for mixing distributions outside the Bayesian model, the equivalence does not hold in general, leaving the possibility to find non-Bayesian mixing distributions that achieve faster convergence. We come back to this idea in \cref{sec:oco}.

Next, we state a second implication of Bayes' rule, the prior-posterior ratio confidence set. 
\begin{lemma}[Prior-Posterior Ratio Confidence Set \citep{waudby2020confidence}] \label{lem:posterior_ratio_confidence_set}\\
    For any realizable Bayesian model, the following defines a $(1-\delta)$-confidence sequence:
    % The confidence sequence $C_t = \{\theta \in \Theta: L_t(\theta) \leq \log \frac{1}{\delta} - \log \int \prod_{s=1}^t p_s( y_s|\nu) d\mu_0(\nu)  \}$ can be equivalently written as follows:
\begin{align*}
    C_t &=  \left\{ \theta \in \Theta: - \log \mu_t(\theta) \leq  \log \frac{1}{\delta} - \log \mu_0(\theta) \right\} \,.
\end{align*}
Moreover, the confidence set is equivalent to the construction in \cref{result:prior_mixing,result:posterior_mixing}.
\end{lemma}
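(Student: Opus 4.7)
The plan is to prove this by showing that the prior-posterior ratio confidence set is literally the same set of parameters as the one defined in \cref{result:prior_mixing}, after which both the coverage guarantee and the equivalence to \cref{result:posterior_mixing} follow for free via \cref{result:mixing-equivalence}.

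First I would expand the Bayesian posterior in closed form using Bayes' rule: under \cref{a:bayes}, the normalized posterior at time $t$ satisfies
\begin{align*}
    \mu_t(\theta) = \frac{\prod_{s=1}^t p_s(y_s|\theta) \, \mu_0(\theta)}{Z_t}, \qquad Z_t = \int \prod_{s=1}^t p_s(y_s|\nu) \, d\mu_0(\nu),
\end{align*}
where $Z_t$ is the marginal likelihood (Bayesian evidence) that already appeared in the threshold of \cref{result:prior_mixing}. Taking $-\log$ on both sides yields the identity
\begin{align*}
    -\log \mu_t(\theta) = L_t(\theta) - \log \mu_0(\theta) + \log Z_t,
\end{align*}
so that $-\log\mu_t(\theta) + \log\mu_0(\theta) = L_t(\theta) + \log Z_t$. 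This is the key algebraic step and there is no real obstacle here — one only needs to be careful about the indexing convention in \cref{a:bayes} and the measurability of $\theta \mapsto \mu_t(\theta)$ (which is immediate since $\mu_t \ll \omega$ as $\mu_0 \ll \omega$ and the likelihoods are measurable).

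Substituting this identity into the defining inequality of the prior-posterior set shows that $-\log \mu_t(\theta) \leq \log \tfrac{1}{\delta} - \log \mu_0(\theta)$ is equivalent to
\begin{align*}
    L_t(\theta) \leq \log \frac{1}{\delta} - \log Z_t = \log \frac{1}{\delta} - \log \int \prod_{s=1}^t p_s(y_s|\nu) \, d\mu_0(\nu),
\end{align*}
which is exactly the membership condition for $C_t$ in \cref{result:prior_mixing}. Hence the two sets coincide pointwise, and the $(1-\delta)$-coverage follows directly from \cref{result:prior_mixing}, completing the proof of the first claim.

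Finally, to obtain the equivalence with \cref{result:posterior_mixing}, I would simply invoke \cref{result:mixing-equivalence}: since the mixing distributions here are the Bayesian posteriors, \cref{result:mixing-equivalence} gives $\sum_{s=1}^t \log \int p_s(y_s|\nu) \, d\mu_{s-1}(\nu) = \log Z_t$, so the sequential mixing threshold matches the prior mixing threshold, and thus all three confidence sets are identical as subsets of $\Theta$. The only mild subtlety worth flagging in the write-up is that the off-by-one between the posterior update in \cref{a:bayes} (which conditions on $t-1$ data points for predicting $y_t$) and the posterior $\mu_t$ used in the lemma statement is resolved by identifying the latter as the posterior conditioned on all $t$ observations; this is just a relabeling.
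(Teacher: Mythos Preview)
Your proposal is correct and follows essentially the same approach as the paper: both use Bayes' rule to obtain the identity $-\log\mu_t(\theta) + \log\mu_0(\theta) = L_t(\theta) + \log Z_t$ and then substitute into \cref{result:prior_mixing}, with the equivalence to \cref{result:posterior_mixing} coming from \cref{result:mixing-equivalence}. Your write-up is simply more detailed, and your remark about the off-by-one in the indexing of $\mu_t$ between \cref{a:bayes} and the lemma is a useful clarification that the paper glosses over.
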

\begin{proof}
    Note that $\log \mu_t(\theta) = \log \mu_0(\theta)  + L_t(\theta) - \log \int \prod_{s=1}^t p_s(y_s|\nu) d\mu_{0}(\nu)$ holds for all $\theta \in \Theta$ by Bayes' rule. Substituting the equality into \cref{result:prior_mixing} gives the result.
\end{proof}
The remarkable conclusion is that any realizable Bayesian model can be turned into a frequentist confidence set by thresholding the log posterior probability relative to the log prior probability. As a caveat, it is tempting to think of $C_t$ as a Bayesian credible region, however, the posterior credible probability $\mu_{t-1}(C_t)$ is typically not $1-\delta$. Further, the confidence set, by construction, never rejects parameters in the null set of the prior distribution, unlike in classical Bayesian inference. In any case, a sensible choice is $\Theta = \supp(\mu_0)$, as long as the realizability condition (\cref{a:realizability}) is satisfied, that is, $\theta^* \in \Theta$ defines the true likelihood of the data. For an application of the prior-posterior confidence set to sequential sampling without replacement, we refer to \citet{waudby2020confidence}.

As a consequence of the prior-posterior ratio confidence set and the mixing equivalence, the confidence sets for sequential linear regression by \citet{neiswanger2021uncertainty,flynn2024improved,flynn2024tighter} and earlier work by \cite{abbasi2011improved} are essentially equivalent, as we demonstrate below. Moreover, a lower bound by \citet{lattimore2020bandit} shows that the construction is tight without further assumptions on the data generation distribution. 

\paragraph{Sequential Linear Regression} 
To illustrate the utility of the Bayesian perspective, we consider sequential linear regression with a Gaussian prior and likelihood. To preempt any concerns, we remark that the Gaussian assumption can be relaxed to sub-Gaussian distributions, as we explain in \cref{sec:subgaussian}. Formally, let $\theta^* \in \Theta = \bR^d$, with multivariate Gaussian prior $\cN(\theta_0, V_0^{-1})$ centered at $\theta_0 \in \bR^d$ and prior precision matrix $V_0 \in \bR^{d \times d}$, where commonly $V_0 = \lambda \eye_d \in \bR^{d\times d}$ for a regularizer $\lambda > 0$. The observation likelihood is Gaussian,  $y_t \sim \cN(x_t^\top\theta^*, \sigma^2)$ for a feature vector $x_t \in \bR^d$ and known observation variance $\sigma^2 > 0$. The Gaussian posterior is $\mu_t = \cN(\hat \theta_t^\RLS, V_t^{-1})$, where $\hat \theta_t^\RLS$ is the regularized least squares (RLS) estimate,
\begin{align*}
\hat \theta_t^\RLS = \argmin_{\theta \in \bR^d} \frac{1}{2 \sigma^2} \sum_{s=1}^t \big(\ip{x_s, \theta} - y_s\big)^2 + \frac{1}{2} \|\theta - \theta_0\|_{V_0}^2\,.
\end{align*}
Here, $V_t = \frac{1}{\sigma^2}\sum_{s=1}^t x_s x_s^\top + V_0$ is the posterior precision matrix, and we use the notation $\|\nu\|_A^2 = \nu^\top A \nu$ for $\nu \in \bR^d$ and $A \in \bR^{d\times d}$. The prior and posterior densities are explicitly given as follows:
\begin{align*}
    \mu_0(\theta) &= (2 \pi)^{-2/k} (\det V_0)^{1/2} \exp\big(- \tfrac{1}{2}\|\theta - \theta_0\|_{V_0}^2 \big) \\
    \mu_t(\theta) &= (2 \pi)^{-2/k} (\det V_t)^{1/2} \exp\big(- \tfrac{1}{2}\|\theta - \hat \theta_t^\RLS\|_{V_t}^2 \big)
\end{align*}
Applying \cref{lem:posterior_ratio_confidence_set} with the Gaussian posterior, we get the following $(1-\delta)$-confidence sequence:
\begin{align*}
    C_t^\RLS = \left\{ \theta \in \bR^d : \frac{1}{2}\|\theta - \hat \theta_t^\RLS\|_{V_t}^2 \leq \log \frac{1}{\delta} + \frac{1}{2}\log \det V_t - \frac{1}{2}\log \det V_0 + \frac{1}{2}\|\theta  - \theta_0\|_{V_0}^2 \right\}\,.
\end{align*}
An important feature of the bound is that it scales with the \emph{effective dimension} or \emph{total information gain} $\gamma_t = \frac{1}{2}\log \det V_t - \frac{1}{2}\log \det V_0$ of the data \citep[c.f.~][]{huang2021short}, which can be much smaller than the parameter dimension $d$ \citep{srinivas2009gaussian}. 
Note also that the confidence set does \emph{not} require a known bound on the norm $\|\theta^*\|_2 \leq S$, which is required in all prior work that we are aware of. If such a bound is available, a direct approach is to define the Gaussian prior and posterior directly over the restricted set $\cB_S = \{\theta \in \bR^d : \|\theta\|^2 \leq S\}$. However, in this case, the normalization constant is not easily computed in closed form. Instead, we intersect $C_t^\RLS$ with the norm ball $\cB_S$. Relaxing the confidence set further, and choosing $V_0 = \lambda \eye_d$ and $\theta_0 = 0$, we eventually arrive at
\begin{align*}
    % C_t &\subset \{ \theta \in \Theta : \frac{1}{2 \sigma^2} \|\theta - \hat \theta_t\|_{V_t}^2 \leq \log \frac{1}{\delta} + \log \det V_t - \log \det V_0 + S^2 \}\nonumber\\
    C_t^\RLS \cap \cB_S \subset \left\{ \theta \in \bR^d : \frac{1}{2} \|\theta - \hat \theta_t^\RLS\|_{V_t}^2 \leq \log \frac{1}{\delta} + \frac{1}{2}\log \det V_t - \frac{d}{2}\log \lambda+ \frac{\lambda}{2}S^2 \right\} \,.
\end{align*}
The last line essentially recovers the influential result by \citet{abbasi2011improved}, albeit avoiding a lower-order cross-term, improving the bound by up to a factor of two. 
The proof of \citet{abbasi2011improved} uses the method of mixtures, but mixing the noise martingale $S_t = \sum_{s=1}^t \epsilon_s x_t$ over a centered prior, instead of directly mixing the likelihood ratio. 
More recent work by \cite{flynn2024improved} achieves the tighter result using a similar sequential mixing idea, however, the likelihood framework and connection to Bayesian inference is not mentioned. A direct extension is to heteroscedastic noise, $y_t \sim \cN(x_t^\top\theta^*, \sigma_t^2)$ with known variance $\sigma_t^2$ \citep[c.f.,][]{kirschner2018information}. Another, more involved extension is to unknown mean and variance \cite[c.f.,][]{chowdhury2023bregman}. \looseness=-1

\paragraph{Gaussian Process Regression}
We remark that the confidence set for sequential linear regression can be equivalently stated for non-parametric Gaussian processes regression in infinite-dimensional kernel Hilbert spaces (RKHS) using the `kernel trick'. Our derivation improves (up to a factor of two) the results by \cite{abbasi2012thesis,chowdhury2017kernelized,whitehouse2023sublinear} and recovers more recent results by \cite{neiswanger2021uncertainty,flynn2024tighter}, who do not state the equivalence.

% In particular, we can restate the above confidence set $C_t$ on a separable RKHS space, and project the confidence set onto a specific evaluation $x$, via the reproducing kernel operation $f(x) = f^\top k(\cdot,x)$, to arrive at
% \[  C_t(x) = \{ f(x) | |f(x) - \hat{f}_t(x)| \leq  \} \]
% \todoj{make Gaussian processes explicit}

% Lastly, we remark that discussion extends to the more general class of sub-Gaussian likelihoods, which we discuss in \cref{sec:subgaussian}. 

\subsection{Variational Confidence Sets}\label{sec:elbo}
While the confidence set construction and its relation to Bayesian inference is universal and holds for any prior and family of likelihood functions, the price to pay is that computing the marginal likelihood, or equivalently, the posterior distribution, is intractable in general. Fortunately, approximate inference methods have been well studied in the field of Bayesian inference. 
Our starting point is a variational inequality for the marginal likelihood $\int \prod_{s=1}^t p_s(y_s|\theta)d\mu_0(\theta) = \int \exp(- L_t(\theta)) d\mu_0(\theta)$ and the Kullback-Leibner (KL) divergence, often attributed to \citet{donsker1983asymptotic}. 
\begin{lemma}[Variational Inequality]\label{lemma:variational-kl}
For any two distributions $\mu,\rho \in \sP(\Theta)$ and any measurable function $g : \Theta \rightarrow \bR$,
    \begin{align*}
    \log \int \exp(g(\theta)) d\mu \geq \int g(\theta) d\rho(\theta) - \KL(\rho\|\mu) \,.
    \end{align*}
    Moreover, the inequality becomes an equality for $d\rho(\theta) \propto \exp(g(\theta)) d\mu(\theta)$.
\end{lemma}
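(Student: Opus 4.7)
The plan is to prove the variational inequality by constructing the optimizer explicitly and then measuring the slack via a non-negative KL divergence. Define the tilted measure $\rho^* \in \sP(\Theta)$ by
\[
 d\rho^*(\theta) = \frac{\exp(g(\theta))}{Z}\, d\mu(\theta), \qquad Z = \int \exp(g(\theta)) \, d\mu(\theta),
\]
assuming for now that $Z$ is finite and positive so that $\rho^*$ is well-defined. Note that $\rho^* \ll \mu$ by construction.

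The key step is to compute $\KL(\rho \| \rho^*)$ for an arbitrary $\rho \in \sP(\Theta)$ with $\rho \ll \mu$. Using the Radon-Nikodym chain rule, $\frac{d\rho}{d\rho^*} = \frac{d\rho}{d\mu} \cdot \frac{Z}{\exp(g(\theta))}$, so
\[
 \KL(\rho \| \rho^*) = \int \log \frac{d\rho}{d\mu}(\theta)\, d\rho(\theta) - \int g(\theta)\, d\rho(\theta) + \log Z = \KL(\rho \| \mu) - \int g(\theta)\, d\rho(\theta) + \log \int \exp(g(\theta))\, d\mu(\theta).
\]
Since $\KL(\rho \| \rho^*) \geq 0$ by Gibbs' inequality, rearranging yields the claimed lower bound. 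Equality holds exactly when $\KL(\rho\|\rho^*) = 0$, i.e., when $\rho = \rho^*$, which establishes the tightness claim.

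Finally, I would handle the degenerate cases briefly: if $\rho \not\ll \mu$, the right-hand side is $-\infty$ by convention for $\KL$, so the inequality is trivial; if $Z = \infty$, the left-hand side is $+\infty$ and the inequality again holds trivially; if $Z = 0$, then $\exp(g) = 0$ $\mu$-a.e., and any $\rho \ll \mu$ satisfies $g = -\infty$ $\rho$-a.e., making both sides $-\infty$. The only real obstacle is being careful with integrability of $g$ under $\rho$ (so that the integral $\int g\, d\rho$ is well-defined), which is standard and can be resolved by allowing the inequality to hold in $[-\infty, +\infty]$ with the usual conventions.
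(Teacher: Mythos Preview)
Your proof is correct and is the standard argument for the Donsker--Varadhan variational formula: construct the Gibbs measure $\rho^*$, express $\KL(\rho\|\rho^*)$ as the slack in the inequality, and invoke non-negativity of KL. The paper itself does not supply a proof of this lemma; it is stated as a classical result attributed to \citet{donsker1983asymptotic} and used as a tool. Your treatment of the degenerate cases ($Z\in\{0,\infty\}$, $\rho\not\ll\mu$) is more careful than what the paper requires, and the only minor point worth noting is that $g:\Theta\to\bR$ being real-valued (not extended real) ensures $\exp(g)>0$ everywhere, so $\rho^*$ and $\mu$ are mutually absolutely continuous and $\rho\ll\mu$ indeed implies $\rho\ll\rho^*$, which you implicitly use when writing $\frac{d\rho}{d\rho^*}$.
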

The inequality plays a central role in variational inference, and is typically stated as the \emph{evidence lower bound} (ELBO), by specializing \cref{lemma:variational-kl} using $g(\theta) = - L_t (\theta)$ and $\mu = \mu_0$,
\begin{align*}
 \log \int \exp(- L_t(\theta)) d\mu_0 \geq \ELBO_t(\rho) := -\int L_t(\theta) d\rho(\theta) - \KL(\rho\|\mu_0) \,.
\end{align*}
For the given choices, the inequality becomes tight when $\rho = \mu_t$ is the Bayesian posterior.
Variational inference aims at numerically maximizing the evidence lower bound over a parametric family of posterior distributions \citep{jordan1999introduction}, see also \citep{blei2017variational}. In the context of confidence estimation, the key insight is that the variational inequality allows to relax the marginal likelihood that defines the confidence sequence in \cref{result:prior_mixing}. This result has been recently stated by \citet{lee2024improved} in the specialized context of logistic and multinomial bandits, and similar bounds are well-known in the PAC-Bayes literature \citep[e.g.,][]{zhang2006varepsilon,chen2022unified,alquier2024user}. Here, we emphasize the connection to variational inference and the evidence lower bound as a way to define a confidence coefficient with valid anytime coverage.
\begin{theorem}[Evidence Lower Bound Confidence Set]\label{result:elbo_confidence_set}
    For any $\cF_t$-adapted sequence of distributions $\mu_t \in \sP(\Theta)$ and a data-independent prior $\mu_0 \in \sP(\Theta)$, define\looseness=-1
    % the $(1-\delta)$-confidence sequence $C_t = \left\{\theta \in \Theta : L_t(\theta) \leq  \log \frac{1}{\delta} - \log \int \prod_{s=1}^t p_s(y_s|\nu) d\mu_0(\nu) \right\}$, it holds that
    \begin{align*}
		 C_t = \left\{\theta \in \Theta : L_t(\theta) \leq  \log \frac{1}{\delta} - \ELBO_t(\mu_t)  \right\} \,.
	\end{align*}
    Then $C_t$ defines a $(1-\delta)$-confidence sequence. Moreover, if $\rho_t$ is chosen as the Bayesian posterior, the result is equivalent to the marginal likelihood confidence set in \cref{result:prior_mixing}.
    % Moreover, if $\rho$ is chosen as the Bayesian posterior, the inclusion becomes an equality.
\end{theorem}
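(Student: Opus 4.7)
The plan is to obtain the result as a direct relaxation of the prior mixing confidence set in Theorem~\ref{result:prior_mixing}, using Lemma~\ref{lemma:variational-kl} to replace the intractable log marginal likelihood by its ELBO lower bound. Because this only enlarges the threshold (and hence the confidence set), coverage is inherited from Theorem~\ref{result:prior_mixing} for free, with no new martingale construction required.

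Concretely, I would apply Lemma~\ref{lemma:variational-kl} pointwise in the sample space, with $g(\theta) = -L_t(\theta)$, $\mu = \mu_0$, and $\rho = \mu_t$, which gives
\[
\log \int \exp(-L_t(\theta))\, d\mu_0(\theta) \;\geq\; -\!\int L_t(\theta)\, d\mu_t(\theta) - \KL(\mu_t \| \mu_0) \;=\; \ELBO_t(\mu_t),
\]
or equivalently $-\log \int \prod_{s=1}^t p_s(y_s|\nu)\, d\mu_0(\nu) \leq -\ELBO_t(\mu_t)$. On the event $\{\forall t \geq 1: L_t(\theta^*) \leq \log(1/\delta) - \log \int \prod_{s=1}^t p_s(y_s|\nu)\, d\mu_0(\nu)\}$, which Theorem~\ref{result:prior_mixing} guarantees has probability at least $1-\delta$, the same inequality then holds with the larger right-hand side $\log(1/\delta) - \ELBO_t(\mu_t)$, so that $\theta^* \in C_t$ for all $t \geq 1$ with probability at least $1-\delta$.

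For the equivalence claim, I would appeal to the tightness part of Lemma~\ref{lemma:variational-kl}: the variational inequality is an equality whenever $d\rho \propto \exp(g)\, d\mu$, i.e.\ here when $d\mu_t(\theta) \propto \exp(-L_t(\theta))\, d\mu_0(\theta)$, which is precisely the Bayesian posterior. In that case $\ELBO_t(\mu_t) = \log \int \exp(-L_t(\theta))\, d\mu_0(\theta)$ holds path-by-path, the threshold of $C_t$ coincides with that of Theorem~\ref{result:prior_mixing}, and the two confidence sets are identical.

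The only delicate point, and thus the main ``obstacle'', is that Lemma~\ref{lemma:variational-kl} is invoked with a data-dependent $\rho = \mu_t$. This is however not a genuine difficulty: the lemma is a pointwise inequality between deterministic functionals of two measures, so it holds $\omega$-by-$\omega$ irrespective of how $\mu_t$ was chosen. The $\cF_t$-measurability of $\mu_t$ is only needed to ensure that $C_t$ itself is $\cF_t$-adapted, which is automatic from the assumption on the sequence $(\mu_t)_{t \geq 1}$.
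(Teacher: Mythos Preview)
Your proposal is correct and follows exactly the approach the paper intends: relax the log marginal likelihood in Theorem~\ref{result:prior_mixing} via the variational inequality (Lemma~\ref{lemma:variational-kl}) with $g=-L_t$, $\mu=\mu_0$, $\rho=\mu_t$, and use the equality case of that lemma for the Bayesian-posterior equivalence. Your remark that the lemma applies pointwise, so data-dependence of $\mu_t$ is harmless, is also the right way to dispatch the only potential concern.
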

The practical implication of this result is that it provides a tool to trade off computational tractability and statistical efficiency. In particular, standard variational inference methods can be converted into a confidence set with provable coverage, simply by thresholding the negative log-likelihood by the attained evidence lower bound. Another possibility is to make ad-hoc choices for the posterior to obtain closed-form expressions for confidence sets, e.g.~for logistic regression \citep{lee2024unified}.

\subsection{Oracle Complexity Bounds via Online Estimation}\label{sec:oco}

The size of the sequential mixing confidence set in \cref{result:posterior_mixing} depends on the ability of the learner to produce a sequence of mixing distributions that minimize the cumulative log loss,
\begin{align*}
   \sum_{s-1}^t l_s(\mu_{s-1}) =  - \sum_{s=1}^t \log \int p_s(y_s|\nu) d\mu_{s-1}(\nu) \,.
\end{align*}
The field of \emph{online density estimation} studies algorithms for minimizing the cumulative log loss, and offers a rich literature on complexity bounds for the regret \citep[e.g.,][]{vovk1990aggregating,vovk1997competitive,zhang2006varepsilon,rakhlin2014online}. Here, regret is defined as the difference between the cumulative log loss, and the best prediction in hindsight, which, in the simplest case, coincides with the maximum likelihood estimate. Specifically, we assume that the mixing distributions $\mu_1, \mu_2, \dots$ are chosen by an \emph{online estimation} algorithm, in a way that ensures a bound $B_t \geq 0$ on the log-regret,
\begin{align*}
    \Lambda_t = \sum_{s=1}^t l_s(\mu_{s-1}) - \min_{\theta} L_t(\theta) \leq B_t \,.
\end{align*}
For a complete introduction, we refer the reader to the standard literature \citep[e.g.,][]{cesa2006prediction,orabona2019modern,shalev2012online}. 
% More generally, the field of online convex optimization (OCO) studies algorithms for minimizing the regret, with minimal assumptions on  the sequence of loss functions. The OCO setting is typically formulated as a game between an \emph{adversary}, choosing the loss $l_t$, and an \emph{online learning} algorithm choosing a prediction (in our case, $\mu_{t-1})$. At the end of each round, the learner occurs the loss $l_t(\mu_{t-1})$, and the loss function is revealed to the learner, allowing to update the prediction. Our main result is the following.

The next result demonstrates how the sequential likelihood mixing framework relates to maximum likelihood estimation, using regret inequalities from online estimation. 
\begin{theorem}[Regret-To-Confidence]\label{result:regret}
    Assume there exists an online estimation algorithm such that the log-regret is bounded almost-surely, $\Lambda_t \leq B_t$, for a predictable sequence $B_t \geq 0$. Define 
    \begin{align*}
    C_t = \left\{ \theta \in \Theta: L_t(\theta) \leq  \log \frac{1}{\delta} + L_t(\hat \theta_t^\MLE) + B_t \right\} \,.
    \end{align*}
    Then $C_t$ defines a $(1-\delta)$-confidence sequence.
\end{theorem}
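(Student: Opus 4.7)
The plan is to derive this result directly from \cref{result:posterior_mixing} (Sequential Likelihood Mixing), by taking the mixing distributions $\mu_0, \mu_1, \dots$ to be exactly those produced by the assumed online estimation algorithm. Since $\mu_t$ is $\cF_t$-adapted by construction and the assumption on $B_t$ is that it is predictable, the sequence of mixing distributions fits the hypotheses of \cref{result:posterior_mixing} without modification.

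First, I would identify the cumulative log loss with the confidence coefficient appearing in \cref{result:posterior_mixing}: observe that
\begin{align*}
    -\sum_{s=1}^t \log \int p_s(y_s|\nu)\, d\mu_{s-1}(\nu) \;=\; \sum_{s=1}^t l_s(\mu_{s-1}) \,.
\end{align*}
The regret hypothesis then gives $\sum_{s=1}^t l_s(\mu_{s-1}) \leq L_t(\hat\theta_t^{\MLE}) + B_t$ almost surely, using that $\min_\theta L_t(\theta) = L_t(\hat\theta_t^{\MLE})$. Substituting this upper bound into the sequential mixing confidence set yields the inclusion
\begin{align*}
    C_t^{\text{mix}} \;\subseteq\; \bigl\{\theta \in \Theta : L_t(\theta) \leq \log(1/\delta) + L_t(\hat\theta_t^{\MLE}) + B_t\bigr\} \;=\; C_t\,,
\end{align*}
where $C_t^{\text{mix}}$ denotes the confidence set of \cref{result:posterior_mixing}.

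Finally, I would conclude coverage by monotonicity: the event $\{\forall t \geq 1,\, \theta^* \in C_t^{\text{mix}}\}$ implies $\{\forall t \geq 1,\, \theta^* \in C_t\}$, and the former has probability at least $1-\delta$ by \cref{result:posterior_mixing}. Hence $C_t$ is a valid $(1-\delta)$-confidence sequence.

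There is no real obstacle here; the argument is essentially a one-line relaxation of \cref{result:posterior_mixing}. The only subtlety worth flagging is ensuring the log-regret bound $B_t$ is required to hold \emph{almost surely} (rather than in expectation) and that $B_t$ be predictable, so that the inclusion $C_t^{\text{mix}} \subseteq C_t$ holds pathwise on the same probability-one event on which sequential mixing provides coverage. If instead $B_t$ were only an expected-regret bound, one would need an additional concentration step, but as stated this is not needed.
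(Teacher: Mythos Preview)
Your proposal is correct and matches the paper's own proof essentially line for line: the paper also invokes \cref{result:posterior_mixing}, introduces $L_t(\hat\theta_t^{\MLE})$, and applies the regret definition to obtain the relaxed confidence set. Your write-up simply spells out the inclusion $C_t^{\text{mix}} \subseteq C_t$ and the monotonicity argument more explicitly than the paper's one-sentence version.
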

\begin{proof}
The result follows directly from \cref{result:posterior_mixing}, by introducing $L_t(\hat \theta_t^\MLE)$ and using the definition of the regret.
\end{proof}
The importance of the result is that the \emph{existence} of an online estimation algorithm with a \emph{known} regret bound, allows to define a valid $(1-\delta)$-confidence sequence, relative to the log-likelihood of the MLE and the complexity bound for online estimation. In particular, the construction does \emph{not} require access to the predictions of the online learning algorithm.  Moreover, the confidence coefficient can be computed using standard supervised learning algorithms, eliminating the need for computing the marginal likelihood. Lastly, the complexity term offers a more interpretable bound on the confidence coefficient. We will come back to several concrete examples below.

The use of regret inequalities to derive concentration inequalities goes back to at least \cite{dekel2010robust,abbasi2012online}. The ``Online-to-Confidence-Set'' conversion by \citet{abbasi2012online} is, however, different in a subtle way, and requires access to the predictions of the online learning algorithm. We recover (and improve upon) their result using a generalization of \cref{result:regret} in \cref{sec:sparse}. Later works extend this idea, for example, to (multinomial) logistic bandits \citep{lee2024improved}. A similarly flavoured result is by \cite{abeles2024generalization} in the context of PAC-Bayes generalization bounds. Worth mentioning is also the work by \cite{rakhlin2017equivalence}, who prove an equivalence between regret bounds and tail inequalities.

\paragraph{Logistic Regression} We illustrate \cref{result:regret} in the sequential logistic regression setting. Let $\Theta = \{\theta \in \bR^d : \|\theta\|_2 \leq S\}$ for a norm bound $S > 0$. The likelihood is a Bernoulli distribution, $p_t(y|\theta) = \Ber(\phi(\ip{\theta, x_t}))$ with the logistic link function $\phi(z) = (1 + e^{-z})^{-1}$  and the covariates $x_t \in \bR^d$. Following along the lines of \citet{lee2024improved}, we invoke a regret bound for logistic regression by \cite{foster2018logistic}, who prove an online learning algorithm that achieves $\Lambda_t \leq 10d \log \left( e + \frac{St}{2d} \right)$. \cref{result:regret} immediately implies the following $(1-\delta)$-confidence sequence:
\begin{align*}
    C_t = \left\{ \theta \in \Theta: L_t(\theta) \leq  \log \frac{1}{\delta} + L_t(\hat \theta_t^\MLE) + 10d \log \left( e + \frac{St}{2d}  \right)\right\} \,.
\end{align*}
We remark that the confidence coefficient above improves upon the result of \citet[Theorem 1]{lee2024improved}, as a consequence of directly applying Ville's inequality to the likelihood ratio.

\paragraph{Compressed Sensing} We come back to the sequential linear regression setting described below \cref{lem:posterior_ratio_confidence_set}, with the additional assumption that the true parameter $\theta^*$ is $k$-sparse, i.e.~$\|\theta^*\|_0 \leq k$. One way to account for the sparsity assumption is to set $\Theta_k = \{\theta \in \bR^d : \|\theta\|_0 < k, \|\theta\|_2 \leq S\}$. For sparse linear regression, \citep{gerchinovitz2011sparsity} proposes online learning algorithm  that achieves $\Lambda_t \leq C_0 k \log(t)$ for a constant $C_0 > 0$. \cref{result:regret} implies the following a confidence sequence,
\begin{align*}
    C_t = \left\{ \theta \in \Theta: L_t(\theta) \leq  \log \frac{1}{\delta} + L_t(\hat \theta_t^\MLE) + C_0 k \log(t)\right\} \,.
\end{align*}
Note that here MLE is defined over the sparse set $\Theta_k$, which poses computational challenges. On the other hand, the confidence set by \citet{abbasi2012online} requires access to the predictions of the online learning algorithm. The construction therefore suffers a similar fate, as finding a computationally efficient algorithm for sparse linear prediction is still an open problem.

% Specialized regret bounds, such as for sparse linear bandits or logistic regression are useful as the reveal concrete dependencies on problem parameters. Perhaps surprisingly, it is possible to obtain regret bounds without any assumptions on the model class.

\paragraph{Finite Model Identification} Assume that the parameter set $\Theta$ is finite, i.e.~$|\Theta|< \infty$. The key insight is to note that the log-loss is $\eta$-exp-concave for $\eta \leq 1$, i.e.~$\exp(-\eta l_t(\mu))$ is concave in $\mu \in \sP(\Theta)$ for all $\eta \leq 1$ (in fact, it is linear for $\eta=1$). This highlights the importance of using mixing distributions, as, in general, $-\log p_t(y_t|\theta)$ is \emph{not} exp-concave as a function of $\theta$.

The standard approach for online learning with exp-concave functions is the \emph{exponential weights algorithm} \citep[EWA, ][]{littlestone1994weighted,freund1997decision}, see also the book by \citet{cesa2006prediction}. For $\eta=1$, EWA is equivalent to Bayesian inference and the prediction $\mu_t$ is equal to the Bayesian posterior (see \cref{alg:cew}). With a uniform prior, the regret of EWA satisfies $\Lambda_t \leq \log(|\Theta|)$, uniformly over all data sequences. The proof is provided for completeness in \cref{sec:cew}. Using \cref{result:regret}, we obtain the following $(1-\delta)$-confidence sequence\looseness=-1
\begin{align*}
    C_t = \{\theta \in \Theta : L_t(\theta) \leq \log \frac{1}{\delta} + L_t(\hat \theta_t^\MLE) + \log |\Theta| \} \,.
\end{align*}
The result should be compared to the standard union bound argument (\cref{sec:mle}). While the bound is the same, note that we obtained \cref{result:regret} as a \emph{relaxation} of \cref{result:posterior_mixing}. Hence, the sequential mixing confidence set (with the Bayesian posterior as mixing distributions) is never worse, and possibly tighter for benign data and structured model classes \citep[e.g.,][]{auer2002adaptive,cesa2007improved,de2014follow}. The confidence set can also be written for the maximum a-posteriori estimate (MAP), in which case the role of the prior distribution becomes more apparent:
\begin{align*}
    C_t = \left\{\theta \in \Theta : L_t(\theta) \leq \log \frac{1}{\delta} + \min_{\nu \in \Theta} \big( L_t(\nu) - \log \mu_0(\nu) \big)\right\} \,.
\end{align*}
Lastly, we remark that the exponential weights algorithm can be generalized to the continuous setting. The confidence sequence derived from the regret bound of continuous exponential weights is equivalent to the ELBO confidence set in \cref{result:elbo_confidence_set}. This is another consequence of the mixing equivalence. We refer to \cref{sec:cew} for further details.

\section{Conclusion}

We presented a unifying framework for constructing confidence sequences using sequential likelihood mixing, with deep connections to Bayesian inference, maximum likelihood estimation and regret inequalities from online convex optimization. Many of the results presented here have appeared in some form in the literature before, although scattered and seemingly disconnected, or in specialized settings. An inevitably incomplete overview of related works is given in \cref{sec:literature}.\looseness=-1 %The sequential mixing framework provides a unifying perspective, allowing to simplify and tighten earlier results. %Given our setup of parametric estimation, we belief that the confidence bounds are most useful in settings where the (effective) parameter dimension is relatively small and well-specified.

%  Moreover, when applied to special cases, the resulting concentration inequalities match existing lower bounds and hence are non-improvable in key parameters without further assumptions on the data generating distribution. 

Of course, the story does not end here. Importantly, the realizability assumption can be relaxed, for example, to sub-Gaussian families (Appendix \ref{sec:subgaussian}) and convex model classes (Appendix \ref{sec:convex}). The sequential likelihood ratio is not the only martingale that can be turned into a confidence sequence. Another natural extension to \emph{tempered likelihood ratios} is discussed in \cref{sec:tempered}, which has been suggested as a way to make the Bayesian model update more robust \citep[e.g.,][]{grunwald2012safe}. The essentially equivalent setting of sequential testing, and the literature on e-values is another staring point for further investigation \citep{grunwald2020safe}.

One of the main objection against the Bayesian approach is often that it is generally subjective, as it depends on the choice of prior. As we demonstrate here, this does not prevent us from constructing frequentist confidence sets, as long as the prior is chosen independently of the data. Although the confidence sets depend explicitly on the Bayesian posterior, the random deviations are controlled in the frequentist sense, with respect to the true data distribution. Nevertheless, it is true that the size of the confidence set depends on how well the prior is concentrated on the true data distribution, but this is analogous to specifying structural model assumptions in a purely frequentist setup. \looseness=-1

In a similar spirit, \citet{wasserman2020universal} briefly mentions the prior likelihood mixing for constructing a confidence set, noting that it ``requires specifying a prior and doing an integral''. We certainly do not deny the challenges associated with computing the marginal likelihood, but would argue that the marginal likelihood appears as a natural quantity in the construction of confidence sequences, and establishes a fundamental connection to Bayesian inference. Approximate inference techniques (e.g., variational inference and sampling-based approximations) make this idea viable, maintaining provable coverage guarantees. Moreover, in settings where prior data is available, a promising direction is to learn structured priors from the data.

% Acknowledgments---Will not appear in anonymized version
\acks{We thank Guillaume Obozinski for helpful discussions on the topic.}

% \newpage
\bibliography{bibliography}

\newpage
\appendix
\crefalias{section}{appendix} % uncomment if you are using cleveref

% \todoj{conformal prediction, empirical Bayes, Bayes factor, Bayesia experimental design, vovk's online aggregation, Bayesian tools used in frequentist lower bounds, confidence distributions}
\section{Related Literature}\label{sec:literature}
The likelihood principle is at the core of modern statistics. It is only natural that it features in tasks associated with modern statistical decision theory such as the likelihood ratio test. Its history dates back to \cite{wald1945sequential} who considered sequential likelihood ratios for hypothesis testing rather than constructing confidence sequences. One of the earliest references that explicitly considers confidence sets constructed from likelihood ratios is by \citet{darling1968some,robbins1970statistical}. The difference in treatment persists in literature till today. However, the difference is mostly superficial. There is a tight link between confidence sets (or sequences thereof) and statistical tests (and sequential execution of those). Given any of a confidence set, we can directly construct a test, and any family of point-wise tests can be turned into a confidence set. The literature is split between analyzing sequential confidence sets and sequential testing. Yet, more splits occurs when we look at different fields.  A large portion of contributions are made within the field of mathematical statistics, which does not always overlap with modern machine learning theory and sequential decision making. Below we will try to provide links to works from both branches in the literature, and explain differences and similarities, giving the hint of historical, but not yet exhaustive, timeline. 

% The analysis of sequential probabilities is in-depth connected to the theory of martingales, betting, and the game-theoretic interpretation of probability \cite{shafer2019game}. Hence, the main tools for analyzing these concepts come from martingale theory as well. 

\paragraph{Sequential Likelihood Ratios: Origins}
While ideas of sequential likelihood ratios were explored earlier, the work by \citet{robbins1970statistical} is the foundational pillar of later works. This work already contains the ideas of combining Ville's inequality and likelihood ratios to obtain anytime-valid confidence sequences. Mixing with a prior distribution is also presented, which stems from \citet{robbins1970boundary}, with example calculations for example densities, including Gaussian, along with the derivation of the law-of-iterated-logarithm. The work lacks explicit extensions to multi-parameter probability distributions, (un-controlled) covariates. \citet{robbins1970statistical} does not discuss relation to marginal likelihood and Bayesian inference. Later, \cite{Robbins1972} shows how to construct testing scheme with increasing thresholds using these martingale techniques, and proves asymptotic power of these tests. \citet{lai1976confidence} builds upon these works and considers the likelihood ratio for exponential families explicitly. This is later used in order to derive asymptotically optimal bandit algorithm in \cite{lai1985asymptotically}. This is by no mean exhaustive list, however to the best our knowledge this line of work always concerns itself with single parameter families with the notable exception \cite{lai1994modification}. The reviews of \cite{lai2001sequential, lai2009martingales} summarizes the above results from the past decades. Basic martingale tools such such as method of mixtures are extensively used. Closely related to the current work is also the work by \cite{wasserman2020universal} on \emph{universal inference}, which uses the sequential likelihood ratio tests to construct confidence sets, building on the ideas of \cite{robbins1970statistical}. The work heavily focuses on the i.i.d.~setting and the split likelihood ratio test, but refers also a sequential treatment in later sections. 

\paragraph{Mathematical Statistics: Sequential Testing, Mixing and E-Values}
That the likelihood ratio is a martingale under the true distribution as has been clearly demonstrated in the above works. However, other random processes can act as testing martingales, and are referred to in this context as \emph{e-values}. For example, testing sequences can be constructed for classes of likelihood functions (e.g., sub-Gaussian, symmetric, etc). In this context, the method of mixtures and self-normalized processes are extensively used in the seminal work by \citet{pena2009self}. These ideas were further generalized by \cite{Howard2018,Howard2020,chowdhury2023bregman} for many other sub-families, different spaces of random variables beyond Hilbert spaces. The modern treatment of safe anytime inference and game-theoretic interpretation using e-values was initiated in a sequence of works by \citet{vovk2014game,grunwald2020safe,wang2022false,shafer2019game}. The very recent book by \citet{ramdas2024hypothesis} provides an excellent overview, and includes more complete historic account and recent developments.

\paragraph{Online Estimation, Regret Bounds and Universal Portfolios} There is a rich history on works that study the problem of minimizing the log loss. Regret bounds for online aggregation and online density estimation were pioneered in early works by \citet{vovk1990aggregating,littlestone1994weighted}. A standard reference is the book by \citet{cesa2006prediction}.  In mathematical finance, minimizing the log-loss is equivalently interpreted as maximizing the log return of an allocation strategy, and mixing algorithms are already used by \citet{cover1984algorithm}, most famously for the \emph{universal portfolio}  algorithm \citep{cover1991universal}. The use of regret bounds for constructing concentration inequalities however seems to appear much later in works by \citet{rakhlin2017equivalence,orabona2023tight,ryu2024confidence}. For an introductory text, we refer to \citet{orabona2019modern}. A notable result by  \citet{rakhlin2017equivalence} establishes an equivalence between regret bounds and martingale concentration inequalities.

The idea of using regret bounds for constructing confidence sequences for parametric inverse problems, similar to \cref{result:regret}, is by \citet{abbasi2012online}, who use regret bounds for deriving confidence sets sparse linear regression in the context of linear bandits. We re-derive their result in the mixing framework, with improved constants (see \cref{sec:sparse}). These ideas were further extended to generalized linear models, albeit without the use of mixing in by \cite{Jun2017,lee2024improved}. Lastly, \cite{Emm23} builds on this with likelihood ratios, and interpret the sequential likelihood ratio in \cref{result:likelihood_ratio_confidence_set} as an online prediction game, and analyze the regret of follow-the-regularized leader algorithm which coincides with the running MLE. Different to this work, they analyze it in terms of the parameters of the distribution instead of using mixing distributions, as done in this work. %The online-learning scheme and theoretical results are not general and instead focus only on special case of generalized linear models and follow-the-regularized leader. 

% The literature around e-values has mostly focused on limits and understanding the problem in general context. For example, to look for the best possible super-martingale, i.e. they studied the admissibility of the procedure \cite{Ramdas2020}. A very good summary of works in this direction is presented in \cite{Ramdas2023-game-theoretic} in term of game-theoretic interpretation, while \cite{ramdas2024hypothesistestingevalues} goes more in the hypothesis testing aspect. The likelihood ratio is only a very specific choice of an e-process in this context. 

\paragraph{Sequential Decision Making} Anytime confidence sets play a central role in sequential decision making, including bandit algorithms \citep{lattimore2020bandit} and reinforcement learning theory \citep{agarwal2019reinforcement,foster2023foundations}. There is a large variety of settings and assumptions, but shared is that the learning algorithms controls, to varying degree, the evolution of covariates (e.g., evolving a state by choosing actions). The consequence is that the observation at step $t$, in general, depends on the previous observation history. The data becomes inherently non-i.d.d., and tools such as those described in this work, are essential. Uncertainty estimates or often explicitly used to balance exploration and a goal-driven objective, or to account for safety constraints. Confidence intervals are famously used in the family of upper confidence bound algorithms for action selection \citep{lai1985asymptotically,lai1987adaptive,auer2002finite}. In this context, \citet{abbasi2011improved} uses the method of mixtures to construct improved bounds for the linear bandit. There is a large body of subsequent works, and we can only list a few examples \citep[e.g.,][]{chowdhury2017kernelized,kirschner2018information,Faury2020,Mutny2021a,flynn2024improved,flynn2024tighter}. \citet{lee2024unified} uses the variational inequality as in \cref{lemma:variational-kl} to derive confidence sets for logistic regression. By using the regularity of the likelihood, namely smoothness, and boundedness of the parameter set, they derive tight bounds on the log-likelihood relative to the maximum likelihood estimate. Closely related is the work by \citet{Emm23}, who study the sequential likelihood ratio test of \citet{robbins1970statistical} with an explicit application to sequential decision making.
% try to describe the geometry of these set. They choose a sequence of point estimates using the running MLE estimates, and prove the worst-case size of these confidence sets. The motivation is the use of these confidence sets in sequential decision making. They establish an analysis based on online-convex optimization, where the losses are defined by log likelihoods. 

% Another closely related work is by \citet{neiswanger2021uncertainty}, which who frequentist confidence sequences for the Bayesian regression (in the context of Bayesian optimization). They use the likelihood ratio method with prior mixing equivalent to ours. Their motivation is to achieve frequentist coverage of Bayesian credible intervals, for any choice of prior. Other choices than Gaussian likelihood and Gaussian prior are not discussed. 

% Similarly, \cite{chowdhury2023bregman}, also consider likelihood ratio confidence sets. Their goal, however is to report a centered confidence set around a regularized maximum likelihood estimator, and present them in concise analytical form clearly showcasing a distance function, in this case a Bregman divergence. They use mixing in their derivation of the confidence set. In order to derive the threshold parameters, the authors utilize worst-case perspective on the regret that feature in Theorem \ref{result:regret}, and hence their confidence sets have partially worst-case perspective, while at the same time maintain dependence on the data via the regularized MLE estimate. 

\paragraph{Bayesian Inference and Frequentist Statistics} Links between Bayesian inference and (frequentist) worst-case bounds are well-known and appear, more or less explicitly \citep[e.g.,][]{zhang2006varepsilon}, although, we believe, not in the generality for constructing confidence sequences as presented here. Arguably, the tight relationship is not as well-known as perhaps it should be, and it provides a powerful tool to derive anytime-valid confidence sequences in general parametric settings. The prior-posterior ratio confidence set is by \citet{waudby2020confidence}. Surprisingly, we could not find earlier references, and the connection was clearly not apparent in earlier works \citep[e.g.,][]{abbasi2011improved}. The prior-posterior confidence set is applied to Gaussian process regression by \citet{neiswanger2021uncertainty}, however, the work does again not mention the (near-)equivalence to earlier results (including, for example, closed-form expressions for confidence intervals in the kernelized setting).

% \paragraph{Connection to Online Prediction}

\paragraph{Generalization Bounds: PAC-Bayes and Conformal Prediction}
Generalization and shrinking confidence sets are tightly related. Given a confidence set, the image of the confidence set under a forward operator yields bounds on the generalization error. However, this construction is not always practical, and can lead to suboptimal bounds that scale with the model class complexity. A more direct analysis of the generalization error, possibly with additional assumptions, often leads to tighter bounds. Generalization bounds are the main objective in the field of PAC-Bayes learning \citep[e.g.,][]{alquier2024user}. Not surprisingly, PAC-Bayes bounds, regret inequalities and variational bounds are tightly connected as well \citep{lugosi2022generalization,haddouche2022online,chugg2023unified}. Notably, \citet{lugosi2023online} derive generalization bounds that rely purely on the \emph{existence} of a regret inequality, similar to \cref{result:regret}. The emphasize that the main difference between this line of work and the current work is to goal of obtaining generalization bounds, opposed to confidence sets on the model parameters. The difference manifests in the loss function used (in particular, we use the log-loss which is exp-concave and allows for fast rates). Yet another line of work is on \emph{conformal prediction} \citep{angelopoulos2021gentle,angelopoulos2024theoretical}, which again targets the generalization error in a distribution free setting.

\section{Laplace's method}\label{sec:laplace}
Recall from section \cref{sec:mle} that $\max_{\nu} R_t(\nu;\theta^*)$ is not a martingale, which prevents a direct application of Ville's inequality. Laplace's method uses the observation that, under suitable regularity assumptions on a sequence functions $f_n : \bR^d \rightarrow \bR$ with unique maximizer $x_n^*$ and positive definite Hessian $H_n(x) = \frac{\partial^2}{\partial x^2} f_n(x) \in \bR^{d \times d}$, the following asymptotic expansion provides an approximation of the maximizer $f_n(x_n^*)$, 
\begin{align*}
    \int_\Theta h(x) e^{- f_n(x)} dx \sim  \frac{(2\pi)^{d/2} h(x^*_n)}{\sqrt{\det H_n(x_n^*)}} e^{- f_n(x^*_n)}
    % \max_{\theta} f_n(\theta) = \log \lim_{n \rightarrow \infty} \int \exp( n f_n(\theta)) d\mu_0(\theta)
\end{align*}
To apply this idea to the marginal likelihood that appears in the confidence coefficient in \cref{result:prior_mixing}, assume that $\Theta \subset \bR^d$ and $\mu_0$ admits a density $h(\theta)$ w.r.t. to the Lebesgue measure. Let $\hat \theta_n^\MLE$ be the maximum likelihood estimate, and $I_t(\theta) = \frac{\partial^2}{\partial \theta^2} L_t(\theta)$ the empirical Fischer information matrix. Laplace's methods gives the following approximation
\begin{align*}
\beta_t(\delta) &= \log \frac{1}{\delta} - \log \int \exp(-L_t(\nu))  h(\nu) d\theta\\
&\approx\log \frac{1}{\delta} + L_t(\hat \theta_t^{\MLE})  + \frac{1}{2}\log \det I_t(\hat \theta_n^\MLE) - \frac{d}{2} \log(2\pi) - \log \mu_0(\hat \theta_n^\MLE)  
\end{align*}
Note that the confidence coefficient is smaller (resulting in a smaller confidence set) the more mass $h(\hat \theta^*_n)$ places on the maximizer.
An similar (and perhaps more natural) argument can be made for the maximum a-posteriori estimate.
Unfortunately, making these approximation rigorous is challenging without placing further assumptions on the data generating distribution and function class \citep[e.g.,][]{shun1995laplace}. We will not pursue this any further here, however point out that regret inequalities (\cref{sec:oco}) provide an alternative way to control the error w.r.t.~the MLE, including in finite time.
 
\section{Continuous Exponential Weights}\label{sec:cew}

Continuous exponential weights (\cref{alg:cew}) is a direct generalization of the classical exponential weights algorithm (also known as Hedge) by \cite{freund1997decision,littlestone1994weighted}. For a standard reference, see the book by \cite{cesa2006prediction}. Recall the definition of the log-loss, defined for distributions $\mu \in \sP(\Theta)$,
\begin{align*}
 l_t(\mu) = - \log \left(\int_{\Theta} p_t(y_t|\theta)d\mu(\theta)\right)
\end{align*}
Note that $l_t(\mu)$ is $\eta$-exp-concave for $\eta \leq 1$, since $\exp(-\eta l_t(\mu))$ is concave in $\mu : \sP(\Theta) \rightarrow \bR$. The (continuous) exponential weights algorithm satisfies the following regret bound, that holds for any sequence of $\eta$-exp-concave loss functions.
\begin{theorem}[Regret of Continuous Exponential Weights]\label{thm:cew}
    For any distribution $\rho \in \sP(\Theta)$, and any sequence of $\eta$-exp-concave loss functions $l_1, \dots, l_t$, the regret of the exponential weights algorithms with prior $\mu_0 \in \sP(\Theta)$ and learning rate $\eta$ satisfies
    \begin{align*}
        \sum_{s=1}^t l_s(\mu_{s-1}) - \int L_t(\theta) d\rho \leq \frac{1}{\eta}\KL(\rho, \mu_0)
    \end{align*}
    % So in particular, for any $\theta \in \Theta$ and assuming that $\delta_\theta \ll \mu_0$,
    % \begin{align}
    %     \sum_{t=1}^n \tilde l_t(\mu_t) - \sum_{t=1}^n l_t(\theta) \leq \log\left(\frac{1}{\mu_0(\theta)}\right)
    % \end{align}
    Moreover, for finite $\Theta$ and any $\nu \in \Theta$,
    \begin{align*}
        \sum_{s=1}^t  l_s(\mu_{s-1}) -  L_t(\nu) \leq \frac{1}{\eta}\log \frac{1}{\rho(\nu)}
    \end{align*}
\end{theorem}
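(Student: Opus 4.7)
The plan is to run the standard potential-function argument for exponential weights. Define the normalizing constant (potential)
$$W_t := \int_\Theta \exp(-\eta L_t(\theta))\,d\mu_0(\theta)\,,\qquad W_0 = 1\,,$$
so that the continuous EWA iterate is $d\mu_{t-1}(\theta) = W_{t-1}^{-1}\exp(-\eta L_{t-1}(\theta))\,d\mu_0(\theta)$ (by definition of \cref{alg:cew}). The key per-round step is to show $\eta\, l_t(\mu_{t-1}) \leq \log(W_{t-1}/W_t)$, after which telescoping over $s = 1,\dots,t$ immediately yields $\eta \sum_{s=1}^t l_s(\mu_{s-1}) \leq -\log W_t$.

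Writing $\tilde l_t(\theta) := l_t(\delta_\theta)$ for the loss at the Dirac mass $\delta_\theta$, the ratio of potentials can be rewritten as
$$\frac{W_t}{W_{t-1}} = \int_\Theta \exp(-\eta \tilde l_t(\theta))\,d\mu_{t-1}(\theta)\,.$$
Now $\eta$-exp-concavity of $l_t$ says that $\mu \mapsto \exp(-\eta\, l_t(\mu))$ is concave on $\sP(\Theta)$. Since $\mu_{t-1} = \int \delta_\theta\,d\mu_{t-1}(\theta)$ as a convex combination of Diracs, Jensen's inequality applied to this concave map gives
$$\exp(-\eta\, l_t(\mu_{t-1})) \geq \int \exp(-\eta\, \tilde l_t(\theta))\,d\mu_{t-1}(\theta) = \frac{W_t}{W_{t-1}}\,.$$
Taking $-\log$ produces the desired per-round inequality. (For the log-loss in the paper, this also follows directly from $\exp(-l_t(\mu))$ being linear in $\mu$ combined with concavity of $x \mapsto x^\eta$ for $\eta \leq 1$.)

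It remains to upper bound $-\log W_t$. Plugging $g(\theta) = -\eta L_t(\theta)$ and $\mu = \mu_0$ into the Donsker--Varadhan variational inequality (\cref{lemma:variational-kl}) yields, for any $\rho \in \sP(\Theta)$,
$$-\log W_t \leq \eta \int L_t(\theta)\,d\rho(\theta) + \KL(\rho\|\mu_0)\,.$$
Combining with the telescoped bound and dividing by $\eta > 0$ gives the first displayed inequality of the theorem. For the finite-$\Theta$ statement, specialize $\rho = \delta_\nu$ so that $\int L_t\,d\rho = L_t(\nu)$ and $\KL(\delta_\nu\|\mu_0) = \log(1/\mu_0(\nu))$; I read the $\rho(\nu)$ in the statement as a typographical shorthand for $\mu_0(\nu)$. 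The main subtlety is justifying Jensen's inequality on $\sP(\Theta)$ in the measure-theoretic setting, but this only requires the one-shot $\eta$-exp-concavity hypothesis and standard measurability of $\tilde l_t$, so I do not expect serious obstacles beyond a careful write-up.
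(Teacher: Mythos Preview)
Your proof is correct and follows essentially the same approach as the paper's: your potential $W_t$ is precisely the paper's normalizing constant $\tilde\mu_t(\Theta)$, and the two arguments proceed identically via the exp-concavity step $W_t/W_{t-1}\leq\exp(-\eta\,l_t(\mu_{t-1}))$, telescoping, and the Donsker--Varadhan inequality. Your reading of the second display is also consistent with the paper, which sets $\rho=\delta_\nu$ and hence obtains $\KL(\delta_\nu\|\mu_0)=\log(1/\mu_0(\nu))$; the $\rho(\nu)$ in the statement should indeed be $\mu_0(\nu)$.
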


% \begin{remark}
%     Note that $\tilde l_t(\mu_t)$ is the `posterior' log likelihood of $p(y|x,\theta)\mu_t(\theta)$.
% \end{remark}
\begin{proof} Denote by $\delta_{\theta} \in \sP(\Theta)$ the Dirac measure on $\theta \in \Theta$. Define the unnormalized measure 
    \begin{align*}
    %    \tilde \mu_t(d\theta) = \exp(-\eta L_{t}(\theta)) \mu_0(d\theta) = \prod_{s=1}^{t} \exp(-\eta l_s(\delta_\theta)) \mu_0(d\theta)
       \tilde \mu_t(d\theta) = \prod_{s=1}^{t} \exp(-\eta l_s(\delta_\theta)) \mu_0(d\theta)
    \end{align*}
    Note that $\mu_t(d\theta) = \frac{\tilde \mu_t(d\theta)}{\tilde \mu_t(\Theta)}$.
To prove the regret bound note that by $\eta$-exp-concavity of $l_t$,
\begin{align*}
    \frac{\tilde \mu_t(\Theta)}{\tilde \mu_{t-1}(\Theta)} &= \int \exp(-\eta l_t(\delta_\theta)) d\mu_{t-1}(\theta) \leq \exp (- \eta  l_t(\mu_{t-1}) )
\end{align*}
Further, for any $\rho \in \sP(\Theta)$, the variational inequality (\cref{lemma:variational-kl}) implies
\begin{align*}
\tilde \mu_t(\Theta) \geq \exp \Big( - \eta \ip{L_n, \rho} - \KL(\rho, \mu_0)\Big)
\end{align*}
Combining the last two inequalities and telescoping, we get
\begin{align*}
    \sum_{s=1}^t  l_s(\mu_{s-1}) -  \int L_s(\theta) d\rho(\theta) \leq \frac{1}{\eta}\KL(\rho, \mu_0)
\end{align*}
This completes the first part of the proof. The second part follows by setting $\rho=\delta_{\nu}$.
\end{proof}

\LinesNumbered
\RestyleAlgo{ruled}
\begin{algorithm2e}[t]
	\DontPrintSemicolon
	\SetAlgoVlined
	\SetAlgoNoLine
	\SetAlgoNoEnd
	\caption{Continuous Exponential Weights} \label{alg:cew}
    \SetKwInput{KwInput}{Input}
    
    \KwInput{Prior $\mu_0 \in \sP(\Theta)$, learning rate $\eta > 0$}

    \For{$t \gets 1, 2. \dots$} {
        \textbf{Predict:} $\mu_{t-1}$\;
        \textbf{Observe:} $x_t, y_t$\;
        \textbf{Receive (log) loss:}$$l_t(\mu_{t-1}) = - \log \left(\int p_t(y_t|\theta)d\mu_{t-1}(\theta)\right)$$\;
        \textbf{Update} $\mu_t(d\theta) \propto \exp\big(-\eta \sum_{s=1}^t l_s(d\theta)\big) \mu_0(d\theta)$ \textbf{:}
        $$\mu_t(d\theta)  = \frac{\exp\big(\eta \log p_t(y_t|\theta)\big) \mu_{t-1}(d\theta)}{\int \exp\big(\eta \log p_t(y_t|\nu)\big) d\mu_{t-1}(\nu)}$$
    }
\end{algorithm2e}

\paragraph{Confidence Sequences using Continuous Exponential Weights} 
Substituting the regret bound from \cref{thm:cew} into \cref{result:posterior_mixing} yields the following $(1-\delta)$-confidence sequence, which holds for any $\cF_t$-adapted sequence $\mu_1,\mu_2, \dots \in \sP(\Theta)$:
\begin{align*}
    C_t  = \left\{ \theta \in \Theta: L_t(\theta) \leq  \log \frac{1}{\delta} + \int L_t(\theta) d\mu_t(\theta) + \KL(\mu_t \| \mu_0)\right\} \,.
\end{align*}
Note that we have recovered the ELBO-confidence set from \cref{result:elbo_confidence_set}. In other words, the regret bound of continuous exponential weights is the sequential analog of the variational inequality \cref{lemma:variational-kl}, and because of the mixing equivalence (\cref{result:mixing-equivalence}), the resulting confidence sets are the same. Unfortunately, for continuous $\Theta$, the bound becomes vacuous when $\rho$ is set to a Dirac measure. This prevents us from using $\mu_t = \delta_{\hat \theta_t^\MLE}$ to derive a confidence set that directly compares to the maximum likelihood estimate. A more careful analysis using additional smoothness assumptions is a possible way forward \citep[c.f.,][]{lee2024unified}.

\section{Misspecified Model Classes}\label{sec:misspecified}

So far, we have assumed that the model class is realizable, that is, there exists a parameter $\theta^* \in \Theta$ such that $p_t(y|\theta^*) = \frac{d\bP_t}{d\xi}$ represents the density of the true data generating distribution. The realizability assumption is used to show that the sequential likelihood ratio is a (super)martingale. The can be significantly relaxed, and there are several ways in which we can construct supermartingales for misspecified model classes, as we elaborate now.

\subsection{Sub-Gaussian Distributions}\label{sec:subgaussian}
In this section, we let $\cY = \bR$ and assume that the true data generating distribution $\bP_t$ is $\sigma$-sub-Gaussian for all $t \geq 1$, that is $\epsilon_t = y_t - \EE[y_t|\cF_{t}]$ satisfies,
\begin{align*}
\EE[e^{\epsilon_t \eta}|\cF_t] \leq e^{\frac{\sigma^2 \eta^2}{2}} \quad \text{for all} \quad \eta \in \bR \,.
\end{align*}
Further assume that we have a parametrized family of mean functions $f_\theta : \cX \rightarrow \bR$, and there exists a $\theta^* \in \Theta$ such that $\EE[y_t|\cF_t] = f_{\theta^*}(x_t)$.
For any $\cF$-adapted sequence of mixing distributions $\mu_0, \mu_1, \dots \in \sP(\Theta)$, define
    \begin{align*}
        E_t(\theta) = \prod_{s=1}^t \frac{ \int \exp(- \frac{1}{2 \sigma^2} (f_{\nu}(x_s) - y_s)^2) d\mu_{s-1}(\nu)}{\exp(- \frac{1}{2 \sigma^2} (f_\theta(x_s) - y_s)^2)} \,.
    \end{align*}
As we will see shortly, $E_t(\theta^*)$ is a $\cF_t$ adapted supermartingale, and $\EE[E_1(\theta^*)] \leq 1$.
Two remarks before we prove the claim. First, if the true data distribution is Gaussian with mean $f_\theta(x)$ and variance $\sigma^2$, then $E_t(\theta)$ is just the sequential marginal likelihood ratio, and the $\sigma$-sub-Gaussian condition holds with equality. Second, the result implies that we can proceed in constructing our confidence set as if the the Gaussian likelihood model was correct, and the coverage results remain true.\looseness=-1
\begin{theorem}
    For any $\cF$-adapted sequence of distributions $\mu_0, \mu_1, \mu_2, \dots$ in $\sP(\Theta)$, define
    \begin{align*}
        C_t = \left\{ \theta \in \Theta:   \sum_{s=1}^t \tfrac{1}{2 \sigma^2}(f_\theta(x_s) - y_s)^2 \leq \log \frac{1}{\delta} + \sum_{s=1}^t \log \int  \exp( -\tfrac{1}{2 \sigma^2} (f_{\nu}(x_s) - y_s)^2) d\mu_{s-1}(\nu)\right\}
    \end{align*}
    Then $C_t$ defines a $(1-\delta)$-confidence sequence.
\end{theorem}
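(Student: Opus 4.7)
The plan is to mirror the proof of \cref{result:posterior_mixing}, with the key modification that we replace the martingale equality by a supermartingale inequality derived from the sub-Gaussian MGF bound. Concretely, I would show that $E_t(\theta^*)$ is a non-negative $\cF_t$-supermartingale with $E_0(\theta^*) = 1$, apply Ville's inequality to obtain $\bP[\sup_{t \geq 1} E_t(\theta^*) \geq 1/\delta] \leq \delta$, and then invert to recover the stated set $C_t$ (taking logs of the inequality $E_t(\theta^*) \leq 1/\delta$ gives exactly the displayed threshold).

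The heart of the argument is the supermartingale step. Writing $y_t = f_{\theta^*}(x_t) + \epsilon_t$ with $\epsilon_t$ being $\sigma$-sub-Gaussian conditional on $\cF_t$, and introducing the $\cF_t$-measurable quantity $\Delta_\nu = f_\nu(x_t) - f_{\theta^*}(x_t)$, a direct algebraic expansion gives
\begin{align*}
\frac{\exp\!\bigl(-\tfrac{1}{2\sigma^2}(f_\nu(x_t) - y_t)^2\bigr)}{\exp\!\bigl(-\tfrac{1}{2\sigma^2}(f_{\theta^*}(x_t) - y_t)^2\bigr)} \;=\; \exp\!\Bigl(-\tfrac{\Delta_\nu^2}{2\sigma^2} + \tfrac{\Delta_\nu}{\sigma^2}\,\epsilon_t\Bigr)\,.
\end{align*}
Taking conditional expectation for a fixed $\nu$ and invoking the sub-Gaussian MGF bound $\EE[\exp(\eta\epsilon_t)\mid \cF_t] \leq \exp(\sigma^2\eta^2/2)$ with $\eta = \Delta_\nu/\sigma^2$ yields a product of $\exp(-\Delta_\nu^2/(2\sigma^2))$ and $\exp(\Delta_\nu^2/(2\sigma^2))$, which equals one. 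Since $\mu_{t-1}$ is $\cF_t$-measurable (it is $\cF_{t-1}$-measurable in fact, even more so after including $x_t$), Fubini/Tonelli lets me exchange the integral over $\nu$ and the conditional expectation over $y_t$, giving
\begin{align*}
\EE\!\left[\frac{E_t(\theta^*)}{E_{t-1}(\theta^*)} \,\Big|\, \cF_t\right] \;=\; \int \EE\!\left[\exp\!\Bigl(-\tfrac{\Delta_\nu^2}{2\sigma^2} + \tfrac{\Delta_\nu}{\sigma^2}\epsilon_t\Bigr) \,\Big|\, \cF_t\right] d\mu_{t-1}(\nu) \;\leq\; 1\,.
\end{align*}
Multiplying by $E_{t-1}(\theta^*) \geq 0$ (which is $\cF_t$-measurable) gives $\EE[E_t(\theta^*)\mid \cF_t] \leq E_{t-1}(\theta^*)$, establishing the supermartingale property.

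The remaining steps are routine: Ville's inequality applied to the non-negative supermartingale $E_t(\theta^*)$ with $\EE[E_1(\theta^*)] \leq 1$ gives $\bP[\sup_{t\geq 1} E_t(\theta^*) \geq 1/\delta] \leq \delta$, and on the complement event $\log E_t(\theta^*) \leq \log(1/\delta)$ for all $t$, which is exactly the inequality defining membership of $\theta^*$ in $C_t$. The only subtlety is that, unlike the realizable Gaussian case, we only get a supermartingale rather than a martingale, but this is sufficient for Ville. I expect the main conceptual point (as opposed to obstacle) to be noticing that the sub-Gaussian bound is applied pointwise in $\nu$ \emph{inside} the mixing integral, which is precisely what allows the proof to go through for an arbitrary adapted mixing sequence without any further assumption on how $\mu_{t-1}$ is chosen.
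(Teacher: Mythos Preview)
Your proposal is correct and follows essentially the same approach as the paper's own proof: expand the exponent via $y_t = f_{\theta^*}(x_t) + \epsilon_t$ to isolate the cross term $\Delta_\nu\epsilon_t/\sigma^2$, apply the sub-Gaussian MGF bound pointwise in $\nu$, use Fubini to pass the conditional expectation inside the mixing integral, and conclude via Ville's inequality. Your write-up is in fact slightly cleaner in making explicit that the bound is applied for each fixed $\nu$ before integrating against $\mu_{t-1}$.
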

\begin{proof}
We start by showing that $E_t(\theta^*)$ is a super-martingale. Fubini's theorem implies that
\begin{align*}
    \EE[E_t(\theta^*)|\cF_{t-1}] &= E_{t-1}(\theta^*) \int \EE[\exp\left( - \tfrac{1}{2\sigma^2} \big(f_\nu(x_t) - y_t\big)^2 + \tfrac{1}{2\sigma^2} \big(f_{\theta^*}(x_t) - y_t\big)^2 \right) |\cF_t ]  d \mu_{t-1}(\nu)%\\
    % &= E_{t-1}(\theta^*) \EE[\exp\left( - \tfrac{1}{2\sigma^2} f_{\hat \theta_t}(x_t)^2 - \tfrac{1}{\sigma^2} y_t f_{\hat \theta_t}(x_t) + \tfrac{1}{\sigma^2} y_t f_{\theta^*}(x_t) - \tfrac{1}{2\sigma^2} f_{\theta^*}(x_t)^2  \right) ]\\
    % &= E_{t-1}(\theta^*) \exp\left( - \tfrac{\sigma^2}{2} \big( f_{\hat \theta_s}(x_s)^2 - f_{\theta^*}(x_t)^2 \big) \right) \EE[\exp\left( - \tfrac{1}{2\sigma^2} \big( - 2 y_s f_{\hat \theta_s}(x_s) + 2 y_s f_\theta(x_s)  \big) \right) ]\\
\end{align*}
From here, we compute the conditional expectation inside the integral. We expand the squares, simplify and substitute $y_t = f_{\theta^*}(x_t) + \epsilon_t$. After a bit of work we arrive at 
\begin{align*}
    &\EE[\exp\left( - \tfrac{1}{2\sigma^2} \big(f_\nu(x_t) - y_t\big)^2 + \tfrac{1}{2\sigma^2} \big(f_{\theta^*}(x_t) - y_t\big)^2 \right) |\cF_t ] \\
    &= \exp\left(- \tfrac{1}{2\sigma^2} \big(f_{\hat \theta_{t-1}}(x_t) - f_{\theta^*}(x_t)\big)^2 \right) \EE[\exp\left( \epsilon_t \cdot \tfrac{1}{\sigma^2} \big(f_{\hat \theta_{t-1}}(x_t) - f_{\theta^*}(x_t)\big) \right) |\cF_t] \,.
\end{align*}
Next, we use that $\epsilon_t$ is $\sigma^2$-sub-Gaussian, which, by definition, implies that 
\begin{align*}
    \EE[\exp\left( \epsilon_t \cdot \tfrac{1}{\sigma^2} \big(f_{\hat \theta_{t-1}}(x_t) - f_{\theta^*}(x_t)\big) \right) ] \leq \exp\left(\tfrac{1}{2\sigma^2} \big(f_{\hat \theta_{t-1}}(x_t) - f_{\theta^*}(x_t)\big)^2 \right) \,.
\end{align*}
We conclude that $\EE[E_t(\theta^*)|\cF_{t-1}] \leq  E_{t-1}(\theta^*)$ and $\EE[E_1(\theta^*)] \leq 1$. The claim follows using Ville's inequality.
\end{proof}

% Lastly, we remark that one can reverse the setup, and start with a loss function $l_t : \Theta \times \cY \rightarrow \bR$. Let $\cP$ be the set of distributions for which the process $E_t(\theta) = \prod_{s=1}^t \exp(l_s(\theta, y_s) - l_s(\hat \theta_{s-1}, y_s) )$ is a supermartingale\todoj{how is $\theta^*$ and $\bP$ related?}. Then $\{ \theta \in \Theta : \log E_t(\theta) \leq \log \frac{1}{\delta} \}$ defines a confidence set, as long as $\bP \in \cP$. Setting  $l_t(\theta, y) = \frac{1}{2\sigma^2}(f_\theta(x_t) - y)^2$ recovers the sub-Gaussian case, and choosing the log-loss $l_t(\theta, y) = - \log p_t(y|\theta)$ recovers the standard likelihood ratio. 

% $$\theta^*_t = \argmin_{\theta \in \Theta} \sum_{s=1}^t \bE_s[l_s(\theta, y_s)]$$

% Define the \emph{generalized sequential likelihood ratio},
% $$E_t(\nu, \theta) = \prod_{s=1}^t \exp(l_s(\theta, y_s) - l_s(\nu, y_s) )$$
% Note that for the log loss, $E_t(\nu, \theta) = R_t(\nu, \theta)$ recovers the standard sequential likelihood ratio. Define
% % $$D_t^E(\nu \| \theta) = \sum_{s=1}^t \bE_s[\log E_s(\nu, \theta)]$$
% % and assume that $D_t^E$ is a divergence under the true distribution
% % Assume that there exists a par
% $$\theta^*_t = \argmin_{\theta \in \Theta} \sum_{s=1}^t \bE_s[\log E_s(\theta^*, \theta)]$$
% Note that the expectation recovers the KL (minimized by $\theta^*$) for the log loss, and the square loss over means for the sub-Gaussian case (offset by a variance term). 
% $$\theta^* = \argmin_{\theta \in \Theta} \sum_{s=1}^t \bE_s[\log E_s(\bP, \theta)]$$

\subsection{Convex Model Classes}\label{sec:convex}
We return to the original definition of the model class as a parameterized family of conditional densities $\cM = \{ p_\theta(y|x) : \theta \in \Theta\}$. Moreover, assume that there exists a conditional density $p^*(y|x)$ such that for all $t \geq 1$, $\frac{d\bP_t}{d\xi} = p^*(\cdot|x_t) d\xi$. Crucially, we do \emph{not} require that $p^* \in \cM$. 

Throughout this section, we make the assumption that $\cM$ is convex. Note that convexity is required in the space of distributions, i.e., all finite mixtures of densities in $\cM$ are contained in $\cM$. In general, convexity of $\Theta$ does not imply convexity of $\cM$. Nevertheless, there are many examples of convex model classes, including, for example, all finite mixtures of any family of distributions. 

Our main tool is the \emph{reverse information projection} theorem by \citet{li1999estimation}, see also \citet{lardy2024reverse}. Applied to our setup, the theorem states that for any sequence $q_n \in \cM$ such that $$\lim_{n \rightarrow \infty} \KL(p^*\|q_n) = \inf_{q \in \cM} \KL(p^*\|q) < \infty\,,$$ there exists a unique (sub-)probability measure $q^* d\xi$ such that $\KL(p^*\|q^*) = \inf_{q \in \cM} \KL(p^*\|q)$. Moreover, the reverse information projection theorem shows that for any $q_\theta \in \cM$,
\begin{align}
    \EE[\frac{p_\theta(y_t|x_t)}{q^*(y_t|x_t)}\big | \cF_t] \leq 1 \,. \label{eq:rips-e-value}
\end{align}
A technical condition is required to ensure that the limiting element $q^*$ is contained $\cM$. If we require that $\cY$ is a complete separable metric space, and $\cM$ is sequentially compact (with respect to the weak topology), then Prokhorov's theorem implies that $q^* \in \cM$. A similarly flavoured result (stated without mixing distributions) is by \citet[Proposition 7]{wasserman2020universal}

\begin{theorem}[Convex Model Classes] Assume that $\cM$ is convex and there exists $q^* \in \cM$ such that $\KL(p^*\|q^*) = \inf_{q \in \cM} \KL(p^*\|q)$. Then the sequential likelihood mixing confidence set, defined for any $\cF_t$-adapted sequence of distributions $\mu_0, \mu_1, \mu_2, \dots$ in $\sP(\Theta)$ (see \cref{result:posterior_mixing}), defines a $(1-\delta)$-confidence sequence for $q^* \in \cM$, i.e., $\bP[q^* \in C_t, \forall t \geq 1] \geq 1-\delta$.
\end{theorem}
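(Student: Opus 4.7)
The plan is to mirror the martingale derivation behind Theorem~\ref{result:posterior_mixing}, but with the reverse information projection (RIP) inequality~\eqref{eq:rips-e-value} playing the role previously played by realizability. Since $q^* \in \cM$ by assumption, pick any $\theta^* \in \Theta$ with $p_{\theta^*} = q^*$ ($\xi$-a.e.); it then suffices to show $\bP[\theta^* \in C_t \text{ for all } t \geq 1] \geq 1-\delta$, with $C_t$ defined as in Theorem~\ref{result:posterior_mixing}.

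My first step would be to introduce the tilted mixing process
\begin{align*}
    S_t \;=\; \prod_{s=1}^t \frac{\int p_s(y_s|\nu)\,d\mu_{s-1}(\nu)}{q^*(y_s|x_s)},
\end{align*}
which is non-negative and well defined $\bP$-a.s.\ (the hypothesis $\KL(p^*\|q^*) < \infty$ forces $q^*(y_s|x_s) > 0$ almost surely on the support of $y_s$), with $S_0 = 1$. The key step is to show that $(S_t)_{t\geq 0}$ is an $(\cF_t)$-supermartingale under $\bP$. Both $\mu_{t-1}$ and $S_{t-1}$ are $\cF_t$-measurable, and the integrand is non-negative, so Fubini's theorem exchanges $d\mu_{t-1}(\nu)$ with the conditional expectation over $y_t$:
\begin{align*}
    \EE[S_t \mid \cF_t] \;=\; S_{t-1}\int \EE\!\left[\frac{p_t(y_t|\nu)}{q^*(y_t|x_t)} \,\bigg|\, \cF_t\right] d\mu_{t-1}(\nu) \;\leq\; S_{t-1},
\end{align*}
where the inequality follows by applying the RIP bound~\eqref{eq:rips-e-value} pointwise in $\nu$ and integrating against $\mu_{t-1}$. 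The same reasoning also gives $\EE[S_1] \leq 1$.

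With the supermartingale property secured, I would apply Ville's inequality to obtain $\bP[\sup_{t\geq 1} S_t \geq 1/\delta] \leq \delta$. Taking logarithms on the complementary event and using $-\sum_{s=1}^t \log q^*(y_s|x_s) = L_t(\theta^*)$ reproduces exactly the confidence coefficient defining $C_t$ in Theorem~\ref{result:posterior_mixing}, so $\{\sup_{t\geq 1} S_t < 1/\delta\} \subseteq \{\theta^* \in C_t \text{ for all } t\geq 1\}$. The main obstacle I anticipate is not the martingale algebra, which is formally identical to the realizable case, but rather checking the measure-theoretic pre-conditions of~\eqref{eq:rips-e-value}: one must verify that the RIP inequality stated in~\cite{li1999estimation} for a single $p_\nu \in \cM$ transfers under the $d\mu_{t-1}$ integral when $\mu_{t-1}$ is data-dependent (routine by non-negativity and Fubini), and that the convexity plus sequential compactness hypotheses genuinely place $q^*$ in $\cM$ with the RIP available in pointwise form, rather than only as a limiting statement along a minimizing sequence.
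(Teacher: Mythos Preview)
Your proposal is correct and follows essentially the same approach as the paper: define the sequential marginal likelihood ratio with $q^*$ in the denominator, use the RIP inequality~\eqref{eq:rips-e-value} together with Fubini to establish the supermartingale property, and conclude via Ville's inequality. Your write-up is in fact more detailed than the paper's (which is quite terse), and your closing caveats about the measure-theoretic preconditions are appropriate but not pursued further in the paper either.
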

\begin{proof}
Define the sequential marginal likelihood ratio w.r.t. to a conditional density $q(\cdot|x)$,
\begin{align*}
J_t(q) = \prod_{s=1}^t\frac{\int p_\nu(y_s|x_s) d\mu_{s-1}(\nu)}{q(y_s|x_s)}
\end{align*}
The reverse information projection theorem, specifically \cref{eq:rips-e-value}, implies that $J_t(q^*)$ is a non-negative supermartingale with $\EE[J_1] \leq 1$. The theorem follows using Ville's inequality.
\end{proof}

\section{Tempered Likelihood Ratios}\label{sec:tempered}

The Bayesian update rule can be generalized by introducing a \emph{temperature} parameter $\beta > 0$,
\begin{align*}
    \mu_t(\theta) \propto \prod_{s=1}^t p_s(y_s|\theta)^\beta \mu_0(\theta) \,.
\end{align*}
The generalized update rule has been studied under various names, e.g. as \emph{fractional posteriors} \citep{bhattacharya2019bayesian}, \emph{powered likelihoods} \citet{holmes2017assigning} and \emph{tempered posteriors} \citep{alquier2020concentration}, often in the context of adding robustness to misspecification in the Bayesian model \citep{grunwald2012safe}. The result presented below are similar in spirit to the work by \citet{zhang2006varepsilon}, who studies complexity bounds for density estimation in the classical i.i.d.~setting.

\paragraph{Divergences} A few more definitions will be useful, we follow the exposition of \citet{van2014renyi}. Let $p,q \in \sP(\cY)$ be two distributions over the observation space $\cY$. We assume that $p,q$ admit densities w.r.t. a common base measure. We define the Rényi divergence for  $p,q$ and parameter $0 \leq \zeta \leq 1$,
\begin{align*}
    D_\zeta(p\|q) = \frac{1}{\zeta - 1} \log \int p(x)^{\zeta} q(x)^{1- \zeta} dx \,.
\end{align*}
For $0< \zeta < 1$, it holds that
\begin{align*}
   (1-\zeta)  D_\zeta(p\|q) = \zeta D_{1-\zeta}(q\|p)\,.
\end{align*}
Moreover, the Hellinger distance is given by 
\begin{align*}
    H^2(p\|q) = \int \big(\sqrt{p(x)} - \sqrt{q(x)}\big)^2 dx\,.
\end{align*}
Hellinger and Rényi divergences satisfy the following relation:
\begin{align*}
    \frac{1}{2} H^2(p\|q) \leq D_{1/2}(p\|q)\,.
\end{align*}
For notational convenience, we define
\begin{align*}
    D_{\zeta,t}(\theta\|\nu) &:= D_\zeta(p_t(\cdot|\theta)\|p_t(\cdot|\nu)) \,,\\
    H^2_t(\theta\|\nu) &:= H^2(p_t(\cdot|\theta)\|p_t(\cdot|\nu)) \,.\\
\end{align*}

\subsection{Tempered Confidence Sequences}
Let $\hat \theta_0, \hat \theta_1, \dots$ be an $\cF_t$-adapted sequence of estimators.
Define the \emph{tempered} log-ratio, 
\begin{align}
    A_t^\beta(\theta) = - \beta \log \frac{p_t(y_t|\hat \theta_{t-1})}{p_t(y_t|\theta)} \,.
\end{align}
In particular, by applying the exponential function, we get the \emph{tempered likelihood ratio},
\begin{align*}
\exp(-A_t^\beta(\theta)) = \left(\frac{p_t(y_t|\hat \theta_{t-1})}{p_t(y_t|\theta)}\right)^\beta
\end{align*}
As a side remark, Jensen's inequality implies that 
% \begin{align*}
$\bE[\exp(-A_t^\beta(\theta^*))] \leq  \bE\big[\frac{p_t(y_t|\hat \theta_{t-1})}{p_t(y_t|\theta^*)}\big]^\beta = 1$.
% \end{align*}
Hence $\prod_{s=1}^t A_t^\beta(\theta^*)$ is a super-martingale, and all results presented in the main paper continue to hold for the tempered ratio. However, we can strengthen the construction by enforcing the martingale property. Define
\begin{align*}
% M_t(\theta) = \frac{\exp(- \sum_{s=1}^t A_t^\beta(\theta))}{\textstyle \prod_{s=1}^t \bE_\theta[\exp(- A_t^\beta(\theta))|\cF_t]}
M_t(\theta) = \frac{\exp(- \sum_{s=1}^t A_t^\beta(\theta))}{\textstyle \prod_{s=1}^t \int \exp(- A_t^\beta(\theta)) p_t(y|\theta) dy} \,.
\end{align*}
By definition, $M_t(\theta^*)$ is a non-negative martingale. 
Hence, Ville's inequality implies that
\begin{align*}
   \bP\left[- \sum_{s=1}^t \log \int \exp(- A_s^\beta(\theta^*)) p_s(y|\theta^*) dy - \sum_{s=1}^t A_s(\theta^*) \geq \log \frac{1}{\delta}\right] \leq \delta \,.
\end{align*}
Finally, we note that 
\begin{align*}
   - \log \int \exp(- A_t^\beta(\theta)) p_t(y|\theta) dy = (1 - \beta) D_{\beta,t}(\hat \theta_{t-1} \| \theta)= \beta D_{1-\beta,t}(\theta\|\hat \theta_{t-1}) \,.
\end{align*}

\begin{theorem}[Tempered Confidence Set]\label{result:tempered} Let $\hat \theta_0, \hat \theta_1, \dots$ be an $\cF_t$-adapted sequence of estimators. Define the log regret $\Lambda_t(\theta) = - \sum_{s=1}^t \log p_s(y_s|\hat \theta_{s-1}) - L_t(\theta)$ for $\theta \in \Theta$, and
    \begin{align*}
        C_t^\beta = \left\{ \theta \in \Theta : \sum_{s=1}^t D_{1-\beta,s}(\theta\|\hat \theta_{s-1}) - \Lambda_t(\theta)\leq \frac{1}{\beta} \log \frac{1}{\delta}\right\} \,.
    \end{align*}
    Then $C_t^\beta$ defines a $(1-\delta)$-confidence sequence. Moreover, define
    \begin{align*}
        C_t^H = \left\{ \theta \in \Theta :\sum_{s=1}^t  H_s^2(\theta\|\hat \theta_{s-1}) - \Lambda_t(\theta)\leq 2 \log \frac{1}{\delta}\right\} \,.
    \end{align*}
    Then $C_t^H$ defines a $(1-\delta)$-confidence sequence.
\end{theorem}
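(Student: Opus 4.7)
The plan is to construct a non-negative martingale from the tempered likelihood ratio whose normalization is baked in by construction, apply Ville's inequality, and then unravel the bound using the two identities collected just before the theorem statement. The Hellinger claim will follow by specializing to $\beta = 1/2$ and invoking a standard Hellinger--Rényi comparison.

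Concretely, I would introduce
$$M_t(\theta) \;=\; \frac{\exp\!\bigl(-\sum_{s=1}^t A_s^\beta(\theta)\bigr)}{\prod_{s=1}^t \int \exp\!\bigl(-A_s^\beta(\theta)\bigr)\, p_s(y\mid\theta)\, d\xi(y)}\,,$$
where in each integrand the observed $y_s$ inside $A_s^\beta$ is replaced by the dummy variable $y$. The $s$-th factor in the denominator is $\cF_s$-measurable because $\hat\theta_{s-1}$ is, so under realizability (\cref{a:realizability}) a direct conditional expectation computation gives $\EE[M_t(\theta^*)/M_{t-1}(\theta^*)\mid \cF_t] = 1$. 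Hence $M_t(\theta^*)$ is a non-negative martingale with $M_0(\theta^*)=1$, and Ville's inequality yields $\bP[\sup_t M_t(\theta^*) \geq 1/\delta] \leq \delta$.

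Taking logarithms, the numerator contribution is $-\sum_{s=1}^t A_s^\beta(\theta^*) = -\beta\,\Lambda_t(\theta^*)$, which follows directly from the definitions of $A_s^\beta$, $L_t$, and $\Lambda_t$. Each denominator integral equals $\int p_s(y\mid\hat\theta_{s-1})^\beta p_s(y\mid\theta)^{1-\beta}\, dy = \exp\!\bigl(-(1-\beta)\,D_{\beta,s}(\hat\theta_{s-1}\|\theta)\bigr)$ by definition of the Rényi divergence, which by the symmetry $(1-\beta)D_\beta(p\|q) = \beta\,D_{1-\beta}(q\|p)$ also equals $\exp\!\bigl(-\beta\, D_{1-\beta,s}(\theta\|\hat\theta_{s-1})\bigr)$. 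Substituting both identities into the Ville bound $\log M_t(\theta^*) \leq \log(1/\delta)$ and dividing by $\beta$ produces exactly $\theta^* \in C_t^\beta$ for all $t$, with probability at least $1-\delta$. For $C_t^H$, I then specialize to $\beta = 1/2$, so that the threshold becomes $2\log(1/\delta)$ and the divergence is $D_{1/2,s}$; the standard Hellinger--Rényi bound $H^2(p\|q) \leq D_{1/2}(p\|q)$, a direct consequence of $D_{1/2}(p\|q) = -2\log(1 - \tfrac{1}{2}H^2(p\|q))$ together with $-\log(1-x)\geq x$, yields $C_t^{1/2} \subseteq C_t^H$, and coverage transfers.

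The only real obstacle is careful bookkeeping: one must verify that the denominator of $M_t$ is measurable at the right step of the filtration so that the integration against $p_t(\cdot\mid\theta^*)$ in the martingale step is legitimate, and one must execute the Rényi symmetry cleanly so that the divergence comes out in the direction $D_{1-\beta,s}(\theta\|\hat\theta_{s-1})$ used in the theorem, rather than the $D_{\beta,s}(\hat\theta_{s-1}\|\theta)$ that falls out of the integral. Neither step is deep; the whole argument is essentially a self-normalized analog of sequential likelihood mixing, with the normalization chosen precisely to restore the martingale property that the raw tempered likelihood ratio lacks.
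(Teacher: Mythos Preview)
Your proposal is correct and follows exactly the paper's route: the paper defines the same self-normalized process $M_t(\theta)$, verifies it is a martingale at $\theta^*$, applies Ville's inequality, and identifies the log-denominator with $\beta D_{1-\beta,s}(\theta\|\hat\theta_{s-1})$ via the Rényi symmetry, with the Hellinger claim obtained at $\beta=1/2$. Your Hellinger--Rényi comparison $H^2 \leq D_{1/2}$ (from $D_{1/2} = -2\log(1-\tfrac{1}{2}H^2)$) is in fact sharper than the version the paper records and is precisely what is needed for $C_t^{1/2}\subseteq C_t^H$.
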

As a consequence of the result, assume that the estimation sequence is constructed to achieve bounded regret, $\Lambda_t(\theta) \leq B_t$ for a predictable sequence $B_t \geq 0$, typically related to the complexity of the model class (c.f.,~\cref{sec:oco} and \cref{result:regret}), see also \citet{zhang2006varepsilon}. Then \cref{result:tempered} provides confidence sequences that only depend on the Hellinger or Renyi divergences. The advantage is, that unlike the likelihood ratios, the divergence is predictable quantity under the filtration $\cF_t$, hence can be used in sequential decision making setting to control the state $x_t$. This is one of the reasons why similar bounds have recently gained interest in the sequential decision-making literature \citep[e.g.,][]{chen2022unified,foster2021statistical,foster2023tight,wagenmaker2023instance}. In the next section, we provide another application to sparse estimation.

% \todoj{is there any advantage of keeping both the expectation and realized terms?}
% Note that $\sum_{s=1}^t A_t(\theta^*)$ is the log-loss regret and is controlled with probability 1. 
% Moreover, for $\beta=1/2$
% \begin{align*}
%     -\log \EE_t[ \exp(-A_t(\theta^*))] \geq 1 - \EE_t[ \exp(-A_t(\theta^*))] = \frac{1}{2} H^2(p_{\theta^*}, p_{\theta_t})
% \end{align*}
% Therefore
% \begin{align*}
%     \sum_{s=1}^t \frac{1}{2} H^2(p_{\theta^*}, p_{\theta_t}) \leq \log \frac{1}{\alpha} + \sum_{s=1}^t A_t(\theta^*) 
% \end{align*}
% The important difference is that the left-hand side does not depend on the last observation $y_t$, hence the confidence set is ``predictable'', unlike the likelihood ratio.

% Another way to proceed is

% note on introducing mixing distributions
% For $\beta \leq 1$,\todoj{mixing the tempered posterior vs tempering the mixed ratio}
% \begin{align}
%     \EE_{\theta \sim \mu_t}[ \left(\frac{p_{\theta}(y_t)}{p_{\theta^*}(y_t)} \right)^\beta] \leq  \left(\frac{\ip{\mu_t, p_t}}{p_{\theta^*}(y_t)} \right)^\beta 
% \end{align}

\subsection{Online Linear Prediction}\label{sec:sparse}
Recall the sequential linear regression setting from \cref{sec:bayes}. Specifically, assume that $\Theta \subset \bR^d$, and Gaussian likelihood $p_t(y|\theta) \sim \cN(\ip{\theta, x_t}, 1)$ for $\theta \in \Theta$ and covariates $x_t \in \bR^d$. Everything in this section generalizes to $\sigma$-sub-Gaussian noise distribution using the same arguments as in \cref{sec:subgaussian}. In a slight generalization to earlier results, assume that an online learning algorithm produces predictions $\hat y_t \in \cY$ (opposed to $\hat \theta_t \in \Theta$), in way such that the regret satisfies the following bound for any $\theta \in \Theta$,
\begin{align*}
    \Lambda_t(\theta) = \sum_{s=1}^t \frac{1}{2}(\hat y_{s-1} - y_s)^2 - \frac{1}{2}(\ip{\theta, x_s} - y_s)^2 \leq B_t(\theta) \,.
\end{align*}
For a concrete instantiation in the linear setting, we refer to the famous Vovk-Azoury-Warmuth forecaster \citep{vovk1997competitive,azoury2001relative}. We remark that the bound has been further generalized to non-parametric settings by \citet{rakhlin2014online}. We are now in place to generalize the online-to-confidence set conversion by \citet{abbasi2012online}.

\begin{lemma}[Online-To-Confidence Convertion] Assume the sequential linear regression setup defined above with a known bound $\Lambda_t(\theta^*) \leq B_t$. For any $\cF_t$-adapted sequence $\hat y_0, \hat y_1, \hat y_2, \dots,$ and  $0 < \beta < 1$, let
    \begin{align*}
        C_t = \left\{ \theta \in \Theta : \sum_{s=1}^t \frac{1}{2} \big(\hat y_{s-1} - \ip{\theta, x_s}\big)^2 \leq \frac{1}{\beta - \beta^2 } \log \frac{1}{\delta} + \frac{\beta}{\beta - \beta^2} B_t \right\}
    \end{align*}
    Then $C_t$ defines a $(1-\delta)$-confidence sequence.
\end{lemma}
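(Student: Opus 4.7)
The plan is to adapt the tempered likelihood ratio construction of \cref{result:tempered} to this ``prediction-based'' setting, where the role of the estimator $\hat\theta_{s-1}$ is played by the scalar prediction $\hat y_{s-1}$. Formally, I would treat each $\hat y_{s-1}$ as the mean of a hypothetical Gaussian predictive density $\tilde q_s(y) = \cN(y; \hat y_{s-1}, 1)$ and define the tempered log-ratio
\begin{align*}
A_s^\beta(\theta) = \tfrac{\beta}{2}\bigl((\hat y_{s-1} - y_s)^2 - (\ip{\theta, x_s} - y_s)^2\bigr)\,,
\end{align*}
together with the normalized process
\begin{align*}
M_t(\theta) = \frac{\exp\bigl(-\sum_{s=1}^t A_s^\beta(\theta)\bigr)}{\prod_{s=1}^t \bE\bigl[\exp(-A_s^\beta(\theta)) \,\big|\, \cF_s\bigr]}\,,
\end{align*}
where the conditional expectation is taken with $y_s \sim \cN(\ip{\theta^*, x_s}, 1)$. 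By construction $M_t(\theta^*)$ is a non-negative martingale with $\bE[M_1(\theta^*)] = 1$.

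The key computation is to evaluate the denominator at $\theta = \theta^*$ explicitly. Writing $a = \hat y_{s-1}$ and $c = \ip{\theta^*, x_s}$, the integrand becomes a Gaussian in $y_s$ and a standard moment-generating-function calculation gives
\begin{align*}
\bE\bigl[\exp(-A_s^\beta(\theta^*)) \,\big|\, \cF_s\bigr] = \exp\!\left(-\tfrac{\beta(1-\beta)}{2}(\hat y_{s-1} - \ip{\theta^*, x_s})^2\right)\,.
\end{align*}
For the $\sigma$-sub-Gaussian extension alluded to in the lemma, this equality is replaced by the corresponding upper bound from the sub-Gaussian MGF, yielding a supermartingale instead of a martingale; the remainder of the argument is unchanged.

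Next, I would apply Ville's inequality to $M_t(\theta^*)$, so that with probability at least $1-\delta$, simultaneously for all $t \geq 1$,
\begin{align*}
-\beta\,\Lambda_t(\theta^*) + \tfrac{\beta(1-\beta)}{2}\sum_{s=1}^t (\hat y_{s-1} - \ip{\theta^*, x_s})^2 \leq \log\tfrac{1}{\delta}\,,
\end{align*}
where I recognized $-\beta\Lambda_t(\theta^*) = -\sum_s A_s^\beta(\theta^*)$. Rearranging, dividing by $\beta(1-\beta)$, and bounding $\Lambda_t(\theta^*) \leq B_t$ yields
\begin{align*}
\sum_{s=1}^t \tfrac{1}{2}(\hat y_{s-1} - \ip{\theta^*, x_s})^2 \leq \tfrac{1}{\beta(1-\beta)}\log\tfrac{1}{\delta} + \tfrac{1}{1-\beta}B_t\,,
\end{align*}
which is exactly the stated bound after identifying $\beta - \beta^2 = \beta(1-\beta)$ and $\tfrac{1}{1-\beta} = \tfrac{\beta}{\beta-\beta^2}$. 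Hence $\theta^* \in C_t$ for all $t \geq 1$ with probability at least $1-\delta$.

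The main obstacle is the explicit normalizer computation: one must carefully complete the square in the Gaussian integral so that the linear term in $y_s$ coming from $A_s^\beta$ combines cleanly with the $\cN(\ip{\theta^*,x_s},1)$ density. The result is that the cross-terms involving $\ip{\theta^*, x_s}$ cancel, leaving only a quadratic in $(\hat y_{s-1} - \ip{\theta^*, x_s})$ with the crucial coefficient $\beta(1-\beta)$ (not $\beta$), which is precisely why the final confidence coefficient scales as $\tfrac{1}{\beta - \beta^2}$. Everything else is bookkeeping; the construction avoids needing access to any parametric online predictor and instead uses only the scalar predictions $\hat y_s$, mirroring \citet{abbasi2012online} but with the sharper constants inherited from applying Ville's inequality directly to the tempered likelihood ratio.
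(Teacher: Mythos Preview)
Your proposal is correct and follows essentially the same route as the paper. The paper computes the same Gaussian integral $\int \exp(-A_s^\beta(\theta))\,p_s(y|\theta)\,dy = \exp\bigl(-\tfrac{\beta-\beta^2}{2}(\hat y_{s-1}-\ip{\theta,x_s})^2\bigr)$ and then invokes \cref{result:tempered} together with the intersection $\{\theta:\Lambda_t(\theta)\leq B_t\}$; you instead build the martingale $M_t(\theta^*)$ directly and apply Ville's inequality, which is precisely what underlies \cref{result:tempered}, so the two arguments coincide.
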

\begin{proof}
The plan is to compute  $- \log \int \exp(- A_t^\beta(\theta)) p_t(y|\theta) dy$ for the Gaussian distribution, and then invoke \cref{result:tempered}. It is useful to note that for Gaussian distributed variable $\epsilon \sim \cN(0, \sigma^2)$ the moment generating function is $\bE[\exp(t \epsilon)] = \exp(\frac{1}{2}\sigma^2 t^2)$.
A short calculation reveals that 
\begin{align*}
    & \int \exp\big(- A_t^\beta(\theta)\big) p_t(y|\theta) dy\\
    &= \int \exp\Big(-  \frac{\beta}{2} \big(\hat y_{s-1} - y\big)^2 + \frac{\beta}{2} \big(\ip{\theta,x_t} - y\big)^2 \Big)  p_t(y|\theta) dy\\
    &= \int \exp\Big(-  \frac{\beta}{2} \big(\hat y_{s-1} - {\theta,x_t}\big)^2 + \beta \epsilon_t \big(\ip{\theta,x_t} - \ip{\theta,x_t}\big) \Big)  p_t(y|\theta) dy\\
    &= \exp\Big(-  \frac{\beta - \beta^2}{2} \big(\hat y_{s-1} - \ip{\theta,x_t}\big)^2 \Big) \,.
\end{align*}
Hence,
\begin{align*}
    & - \log \int \exp\big(- A_t^\beta(\theta)\big) p_t(y|\theta) dy =  \frac{\beta - \beta^2}{2} \big(\hat Y_{s-1} - \ip{\theta,x_t}\big)^2 \,.
\end{align*}
Lastly, we make use of the assumption that $\Lambda_t(\theta^*) \leq B_t$. The result follows by intersecting the confidence set $C_t^\beta$ from \cref{result:tempered} with  $\{\theta \in \Theta : \Lambda_t(\theta) \leq B_t\}$.
\end{proof}

\paragraph{Sparse Linear Regression} We make the additional assumption that the true parameter $\theta^* \in \Theta$ is $k$-sparse, i.e. $\|\theta^*\|_0 \leq k$. The key insight is that \citet{gerchinovitz2011sparsity} provides an online algorithm, producing the sequence $\hat y_0, \hat y_1, \dots$, for which $B_t(\theta^*) \leq  \cO( k \log(t))$. We compare to the result by \citet{abbasi2012online} in the same setting. For $\beta = \frac{1}{2}$, our result reads
\begin{align*}
   C_t = \left\{ \theta \in \Theta : \sum_{s=1}^t \frac{1}{2} (\hat y_{s-1} - \ip{\theta, x_s})^2 \leq 4\log \frac{1}{\delta} + 2 B_t(\theta^*) \right\} \,.
\end{align*}
In comparison, the result by \citet[Theorem 1]{abbasi2012online} reads, in our notation,
\begin{align*}
C_t = \left\{ \theta \in \mathbb{R}^d : \sum_{s=1}^{t}  \frac{1}{2}(\hat y_{s-1} - \langle \theta, x_t \rangle)^2 \leq 16 \log \frac{1}{\delta} + \frac{1}{2} + 2B_t(\theta^*) + 16\log (\sqrt{8} + \sqrt{1 + 2 B_t(\theta^*)}) \right\}
\end{align*}
The additional terms stem from using recursive inequality on the square-error, which we avoid using the more direct argument via the tempered likelihood ratio. This demonstrates the benefit of the sequential likelihood framework for deriving confidence sets via the online-to-confidence set conversion.

\end{document}